\documentclass{article}

\usepackage{microtype}
\usepackage{graphicx}
\usepackage{subcaption}
\usepackage{booktabs} % for professional tables

\usepackage{hyperref}

\usepackage[preprint]{icml2026}

\usepackage{amsmath}
\usepackage{amssymb}
\usepackage{amsthm}
\usepackage{mathtools}
\usepackage{enumitem}
\usepackage{multirow}
\usepackage{colortbl}
\usepackage{xcolor}
\usepackage{tikz}
\usetikzlibrary{shadows}

\theoremstyle{plain}
\newtheorem{theorem}{Theorem}[section] 
\newtheorem{lemma}[theorem]{Lemma}
\newtheorem{proposition}[theorem]{Proposition}
\newtheorem{corollary}[theorem]{Corollary}

\theoremstyle{definition}

\theoremstyle{remark}

\usepackage{booktabs}
\usepackage{multirow}
\usepackage{tabularx}
\usepackage{array}
\usepackage{makecell}
\usepackage[dvipsnames]{xcolor}
\usepackage{colortbl}

\usepackage{subcaption} 

\usepackage[most]{tcolorbox}
\newtcbox{\eqhlR}{on line, arc=2pt, boxsep=1pt, colback=red!10, colframe=red!10}
\newtcbox{\eqhlB}{on line, arc=2pt, boxsep=1pt, colback=blue!10, colframe=blue!10}

\usepackage{hyperref}

\definecolor{UpGreen}{RGB}{34, 139, 34}
\definecolor{DownOrange}{RGB}{210, 105, 30}
\definecolor{NeutralGray}{RGB}{120, 120, 120}
\newcommand{\inc}[1]{\textcolor{UpGreen}{\scriptsize\,\raisebox{0.2ex}{$\uparrow$}\,#1}}
\newcommand{\dec}[1]{\textcolor{DownOrange}{\scriptsize\,\raisebox{0.2ex}{$\downarrow$}\,#1}}

\newcommand{\n}{\textsc{BiCC}} % 
\newcommand{\m}{\textsc{RCC}} % 

\definecolor{rowblue}{RGB}{235, 242, 250}

\newcommand{\oursrow}{\rowcolor{rowblue}}

\usepackage[capitalize,noabbrev]{cleveref}

\usepackage[most]{tcolorbox}
\usepackage{fontawesome5}

\definecolor{insightblue}{RGB}{31, 78, 121}

\newtcolorbox{keyinsight}{
  colback=insightblue!4!white,
  colframe=insightblue,
  fonttitle=\bfseries\sffamily,
  title={\raisebox{-0.15em}{\faLightbulb[regular]}\ Insights},
  enhanced,
  attach boxed title to top left={yshift=-2.5mm, xshift=5mm},
  boxed title style={colback=insightblue, colframe=insightblue, sharp corners},
  left=4mm, right=4mm, top=4mm, bottom=4mm,
  arc=1.5mm, boxrule=1.2pt,
  drop shadow={shadow xshift=0.5mm, shadow yshift=-0.5mm, opacity=0.15},
}

\usepackage[textsize=tiny]{todonotes}

\icmltitlerunning{ICML 2026 Submission}

\begin{document}

\twocolumn[
  \icmltitle{When Right Meets Wrong: Bilateral Context Conditioning with Reward-Confidence Correction for GRPO}

  % It is OKAY to include author information, even for blind submissions: the
  % style file will automatically remove it for you unless you've provided
  % the [accepted] option to the icml2026 package.

  % List of affiliations: The first argument should be a (short) identifier you
  % will use later to specify author affiliations Academic affiliations
  % should list Department, University, City, Region, Country Industry
  % affiliations should list Company, City, Region, Country

  % You can specify symbols, otherwise they are numbered in order. Ideally, you
  % should not use this facility. Affiliations will be numbered in order of
  % appearance and this is the preferred way.
  \icmlsetsymbol{equal}{*}

  \begin{icmlauthorlist}
    \icmlauthor{Yu Li}{ece}
    \icmlauthor{Tian Lan}{ece}
    \icmlauthor{Zhengling Qi}{bus}
    \end{icmlauthorlist}
    \icmlaffiliation{ece}{Department of Electrical and Computer Engineering, George Washington University}
    \icmlaffiliation{bus}{School of Business, George Washington University}
    \icmlcorrespondingauthor{Tian Lan}{tlan@gwu.edu}
    \icmlcorrespondingauthor{Zhengling Qi}{qizhengling@gwu.edu}

  % You may provide any keywords that you find helpful for describing your
  % paper; these are used to populate the "keywords" metadata in the PDF but
  % will not be shown in the document
  \icmlkeywords{Machine Learning, ICML}

  \vskip 0.3in
]

% Use ONE of the following lines. DO NOT remove the command.
% If you have no special notice, KEEP empty braces:
\printAffiliationsAndNotice{}  % no special notice (required even if empty)
% Or, if applicable, use the standard equal contribution text:
% \printAffiliationsAndNotice{\icmlEqualContribution}

\begin{abstract}
Group Relative Policy Optimization (GRPO) has emerged as an effective method for training reasoning models. While it computes advantages based on group mean, GRPO treats each output as an independent sample during the optimization and overlooks a vital structural signal: the natural contrast between correct and incorrect solutions within the same group, thus ignoring the rich, comparative data that could be leveraged by explicitly pitting successful reasoning traces against failed ones. To capitalize on this,
we present a contrastive reformulation of GRPO, showing that the GRPO objective implicitly maximizes the margin between the policy ratios of correct and incorrect samples. Building on this insight, we propose Bilateral Context Conditioning (\n{}), a mechanism that allows the model to cross-reference successful and failed reasoning traces during the optimization, enabling a direct information flow across samples.
We further introduce Reward-Confidence Correction (\m{}) to stabilize training by dynamically adjusts the advantage baseline in GRPO using reward-confidence covariance derived from the first-order approximation of the variance-minimizing estimator.
Both mechanisms require no additional sampling or auxiliary models and can be adapted to all GRPO variants.
Experiments on mathematical reasoning benchmarks demonstrate consistent improvements across comprehensive models and algorithms.
Code is available at \href{https://github.com/Skylanding/BiCC}{https://github.com/Skylanding/BiCC}.
\end{abstract}
\section{Introduction}
\label{sec:intro}
Large language models (LLMs) have demonstrated remarkable reasoning capabilities through Reinforcement Learning with Verifiable Rewards (RLVR), where models learn from outcome-based feedback such as binary signals of correctness on the mathematical problem solving~\citep{lightman2023let,shao2024deepseekmath,guo2025deepseek}.
Among various policy optimization approaches, Group Relative Policy Optimization (GRPO) has emerged as a prevalent method due to its simplicity: rather than learning a separate critic as in Proximal Policy Optimization \citep[PPO;][]{schulman2017proximal},  GRPO samples multiple candidate solutions for each query and uses their relative performance to estimate advantages~\citep{guo2025deepseek}. 
The group-based design has proven highly effective for training reasoning models and has been adopted in several state-of-the-art methods~\citep[e.g.,][]{mroueh2025revisiting,yang2025qwen3}.

However, vanilla GRPO misses on an important opportunity. Consider a group eight solution samples partitioned into correct and incorrect subsets. The two subsets
%Consider a scenario where the model samples eight solutions to a math problem, three correct and five incorrect. Within the sampled group, a natural structure emerges: outputs partition into correct and incorrect subsets based on verifiable rewards.The two partitions 
often exhibit distinct reasoning patterns, as correct solutions tend to share successful strategies while incorrect ones reveal common failure modes~\citep{uesato2022solving,tyen2024llms}.
Yet GRPO fails to exploit this partitioned structure~\citep{yue2025vapo}. 
By evaluating each solution in isolation with respect to only group mean, the algorithm remains 'blind' to the rest of the group except for calculating advantages.
Consequently, contrastive signals that could be derived from comparing successful and failed paths within the same context has not been leveraged

To this end, we formalize the above observation by first mathematically reformulating the GRPO objective into a contrastive form.
%\begin{keyinsight}
%Within each sampled group, outputs naturally partition into correct ($\mathcal{O}^+$) and incorrect ($\mathcal{O}^-$) subsets based on verifiable rewards, yet the two partitions do not interact during training.
%\end{keyinsight}
The reformulation reveals that GRPO implicitly maximizes the margin between average policy ratios of correct and incorrect samples. Such reformulation makes the important partition structure behind GRPO visible and exploitable. Building on the insight, we propose \textbf{Bilateral Context Conditioning (\n{})}, which enables explicit cross-partition information flow. 
The key idea is to let the model observe incorrect attempts when evaluating correct ones, and vice versa, strengthening the signal within the same group to update the policy.
Grounded in Learning Using Privileged Information~\citep[LUPI;][]{pechyony2010theory}, the opposite-partition samples serve as privileged context available only during training, enabling contrastive learning with zero inference overhead.

To stabilize training under bilateral conditioning, we further introduce \textbf{Reward-Confidence Correction (\m{})}.
Through the first-order approximation of the variance-minimizing baseline under importance sampling method, we derive a covariance-based correction term related to reward to further reduce the variance in GRPO. 
%This term 
It accounts for the correlation between the model's output confidence and the resulting reward, effectively reducing gradient variance without using additional sampling or auxiliary models.

The proposed two mechanisms operate on complementary components of the GRPO and its variants: \n{} modifies the policy ratio by incorporating cross-partition context, while \m{} refines the advantage estimation through baseline correction.
We evaluate on Qwen3-4B and Phi-4-mini across four mathematical reasoning benchmarks: Math500, AMC 2023, AIME 2024, and AIME 2025. 
\n{} yields consistent gain of 0.3--1.9 percentage points across settings, with larger gains on weaker base models. 
\m{} further stabilizes training by reducing gradient variance by 25--35\%.

Our contributions are as follows:
\begin{itemize}[leftmargin=*, nosep]
    \item A contrastive reformulation of GRPO that exposes implicit partition structure within sampled groups, revealing that the objective maximizes the margin between policy ratios of correct and incorrect samples.
    \item Bilateral Context Conditioning (\n{}), which enables right and wrong attempts to inform each other through cross-partition context augmentation, grounded in LUPI.
    \item Reward-Confidence Correction (\m{}), derived from first-order approximation of the optimal baseline, which reduces gradient variances and stabilizes \n{} training by incorporating reward-confidence covariance.
    \item Empirical validation across GRPO and its variants on two base models, demonstrating consistent gains on competition-level mathematical reasoning benchmarks.
\end{itemize}
\section{Preliminary}
\label{sec:background}

\subsection{Reinforcement Learning for LLM reasoning}
\label{sec:llm_rl}
Consider a query $q$ drawn from a task distribution $\mathcal{D}$ and an output sequence $o = (o_1, \ldots, o_T)$ with $T$ tokens. The policy $\pi_\theta$, parameterized by LLM weights $\theta$, defines a distribution over outputs given $q$ that factorizes autoregressively:
\begin{equation}
\pi_\theta(o|q) = \prod_{t=1}^{T} \pi_\theta(o_t|q, o_{<t})
\end{equation}
where $o_{<t} = (o_1, \ldots, o_{t-1})$ denotes the prefix. For reasoning tasks with verifiable answers, the reward function $R(o, q) \in \{0, 1\}$ is binary, where $1$ is referred to as correct and $0$ otherwise. The objective for training a reasoning model is to maximize expected reward over the policy weight $\theta$:
\begin{equation}\label{eqn: obj}
J(\theta) = \mathbb{E}_{q \sim \mathcal{D},\, o \sim \pi_\theta(\cdot|q)}[R(o, q)]
\end{equation}

\textbf{PPO.} 
Proximal Policy Optimization~\citep{schulman2017proximal} updates the policy using importance sampling from a previous iterate $\pi_{\theta_{\text{old}}}$. At state $s_t = (q, o_{<t})$ with action $a_t = o_t$, the policy importance ratio is $\rho_t = \pi_\theta(a_t|s_t) / \pi_{\theta_{\text{old}}}(a_t|s_t)$. The clipped surrogate objective as an approximation to \eqref{eqn: obj} is defined as
\begin{equation}
L^{\text{CLIP}}(\theta) = \mathbb{E}\bigl[\min\bigl(\rho_t A_t, \text{clip}(\rho_t, 1{-}\epsilon, 1{+}\epsilon) A_t\bigr)\bigr],
\end{equation}
where $\epsilon$ is the clipping threshold. The advantage $A_t = Q(s_t, a_t) - V(s_t)$ measures how much better action $a_t$ is compared to average, with $Q$ and $V$ denoting action-value and state-value functions estimated by a learned critic.

\textbf{GRPO.} 
Group Relative Policy Optimization~\citep{guo2025deepseek} eliminates the critic model by estimating advantages from grouped samples. For each query $q$, GRPO samples $G$ outputs $\{o_1, \ldots, o_G\}$ from $\pi_{\theta_{\text{old}}}$, where $o_i = (o_{i,1}, \ldots, o_{i,T_i})$ has $T_i$ tokens and receives reward $r_i = R(o_i, q)$. The token-level objective of GRPO is defined as
\begin{equation}
J_{\text{GRPO}}(\theta) = \mathbb{E}_{q \sim \mathcal{D}} \left[ 
  \frac{1}{G} \sum_{i=1}^{G} \frac{1}{T_i} \sum_{t=1}^{T_i} L_{i,t}^{\text{CLIP}}
\right]
\label{eq:grpo}
\end{equation}
where $L_{i,t}^{\text{CLIP}} = \min\bigl(\rho_{i,t} A_i, \text{clip}(\rho_{i,t}, 1{-}\epsilon, 1{+}\epsilon) A_i\bigr)$ and $\rho_{i,t} = \pi_\theta(o_{i,t}|q, o_{i,<t}) / \pi_{\theta_{\text{old}}}(o_{i,t}|q, o_{i,<t})$. The advantage $A_i$ is constant across all tokens within $o_i$ and computed via group-wise standardization. Specifically,
\begin{equation}
A_i = \frac{r_i - \mu}{\sigma}, \quad 
\mu = \frac{1}{G}\sum_{j=1}^{G} r_j, \quad
\sigma = \sqrt{\frac{1}{G}\sum_{j=1}^{G}(r_j - \mu)^2}
\label{eq:advantage}
\end{equation}
The group mean $\mu$ replaces the learned critic in PPO as a baseline, while division by $\sigma$ normalizes the advantage scale across queries due to varying difficulty.

\subsection{Policy Gradient with Variance Reduction}
\label{sec:pg_baseline}

Once the objective function is constructed, policy gradient methods are typically used. The policy gradient theorem~\citep{williams1992simple,sutton1999policy} expresses the gradient of expected reward as:
\begin{align}
\nabla_\theta J(\theta) &= \mathbb{E}_{\pi_\theta}\left[R(o, q) \nabla_\theta \log \pi_\theta(o|q)\right]\\
&= \mathbb{E}\left[\frac{\pi_\theta(o | q)}{\pi_{\text{ref}}(o | q)}R(o, q) \nabla_\theta \log \pi_\theta(o|q)\right].
\end{align}
It is known that subtracting a baseline function $b$ properly can reduce variance without changing the gradient, i.e.,
\begin{equation}
\nabla_\theta J = \mathbb{E}\left[\frac{\pi_\theta}{\pi_{\text{ref}}}(R - b) \nabla_\theta \log \pi_\theta\right]
\label{eq:pg_baseline}
\end{equation}
Under importance sampling with weight $w = \pi_\theta / \pi_{\text{ref}}$, minimizing the variance of the objective function with respect to $b$ yields the optimal baseline: 
\begin{equation}
b^* = \frac{\mathbb{E}[R w^2]}{\mathbb{E}[w^2]}.
\label{eq:optimal_baseline}
\end{equation}
See the derivation in Appendix \ref{sec: optimal baseline derivation}.
When $w \equiv 1$ (on-policy), it reduces to $b^* = \mathbb{E}[R]$, which is used in  GRPO objective. The group mean is optimal only when importance weights are independent of rewards, which is often not the case \citep{kakade2002approximately,schulman2015trust}.

\begin{figure*}[!t]
    \centering
    \includegraphics[width=\textwidth]{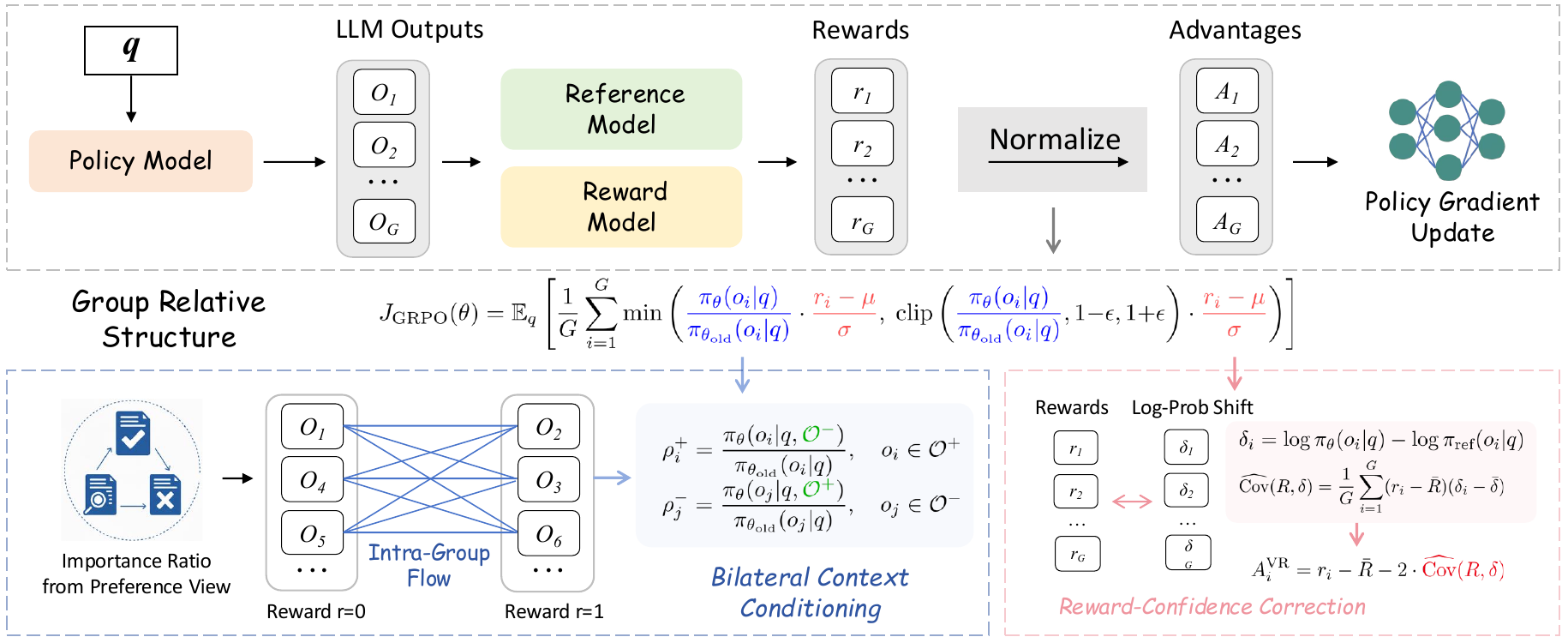}
    \caption{Overview of the proposed methods. The standard GRPO pipeline processes query $q$ through the policy model to generate $G$ outputs, which are scored by reward model and normalized to compute advantages for policy gradient update. \n{} partitions outputs by reward and conditions each sample on opposite-partition outputs, enabling cross-partition information flow where right and wrong attempts inform each other. \m{} computes covariance between rewards and log-probability shifts to correct the advantage estimation.}
    \label{fig:overview}
\end{figure*}

\section{Another Look at GRPO and Variance Reduction}
\label{sec:analysis}

\subsection{Contrastive Reformulation of GRPO Objective}
\label{sec:contrastive}
The rollout within each group in GRPO produces a rich structure: for each query, correct and incorrect outputs naturally form two partitions with distinct optimization roles~\citep{mroueh2025revisiting,mroueh2025reinforcement,li2025inspo}.
We formalize this observation by deriving an equivalent form of GRPO objective under binary rewards $r_i$.

\textbf{Sample Partition.}
For each query $x$, we can naturally partition the group of outputs  into two disjoint subsets based on their corresponding binary rewards $\{r_i\}_{i=1}^G$. Consider
\begin{equation}
\mathcal{O}^+ = \{o_i \mid r_i = 1\}, \quad \mathcal{O}^- = \{o_i \mid r_i = 0\},
\end{equation}
where $G^+ = |\mathcal{O}^+|$ and $G^- = |\mathcal{O}^-|$ represent the cardinality of the correct and incorrect subsets, respectively, and $\hat{p} = G^+/G$ denotes the sample percentage of positive rewards in the group. 
Thus $\mu = \hat{p}$ and $\sigma^2 = \hat{p}(1-\hat{p})$.
Substituting into Eq.~\eqref{eq:advantage} gives advantages in each group as
\begin{equation}
A^+ = \sqrt{\frac{1-\hat{p}}{\hat{p}}}, \quad A^- = -\sqrt{\frac{\hat{p}}{1-\hat{p}}}
\label{eq:binary_advantage}
\end{equation}
respectively. Note that $A^+ > 0$ and $A^- < 0$.
The clipping mechanism behaves asymmetrically depending on the sign of the advantage. In particular,
when $A > 0$, the $\min$ in the clipped objective selects the smaller positive value, yielding $A \cdot \min(\rho, 1+\epsilon)$. 
When $A < 0$, both terms are negative, so the $\min$ selects the more negative value, yielding $A \cdot \max(\rho, 1-\epsilon)$. 
We therefore define:
\begin{equation}
\mathcal{C}_{\text{up}}(\rho) = \min(\rho, 1+\epsilon), \quad \mathcal{C}_{\text{low}}(\rho) = \max(\rho, 1-\epsilon).
\label{eq:clip_functions}
\end{equation}

\textbf{Contrastive Form.} 
Applying $\mathcal{C}_{\text{up}}$ to positive samples and $\mathcal{C}_{\text{low}}$ to negative samples, the GRPO objective can be written as:
\begin{equation}
J_{\text{GRPO}} = \mathbb{E}_{q} \left[ 
  \frac{G^+}{G} A^+ \bar{\rho}^+_{\text{clip}}
  + \frac{G^-}{G} A^- \bar{\rho}^-_{\text{clip}}
\right]
\end{equation}
where $\bar{\rho}^+_{\text{clip}} = \frac{1}{G^+} \sum_{i \in \mathcal{O}^+} \mathcal{C}_{\text{up}}(\rho_i)$ and $\bar{\rho}^-_{\text{clip}} = \frac{1}{G^-} \sum_{j \in \mathcal{O}^-} \mathcal{C}_{\text{low}}(\rho_j)$ are the average clipped ratios. Since $\hat{p} \cdot A^+ = (1-\hat{p}) \cdot |A^-| = \sqrt{\hat{p}(1-\hat{p})} = \sigma_q$, factoring out yields:
\begin{equation}
J_{\text{GRPO}}(\theta) = \mathbb{E}_{q} \left[ 
  \sigma_q \cdot \left( \bar{\rho}^+_{\text{clip}} - \bar{\rho}^-_{\text{clip}} \right),
\right]
\label{eq:c_grpo}
\end{equation}
where the pairwise contrastive is defined as
\begin{equation}
\bar{\rho}^+_{\text{clip}} - \bar{\rho}^-_{\text{clip}}
= \frac{1}{G^+ G^-} \sum_{i=1}^{G^+} \sum_{j=1}^{G^-} \left( \mathcal{C}_{\text{up}}(\rho_i^+) - \mathcal{C}_{\text{low}}(\rho_j^-) \right).
\end{equation}
Substituting into Eq.~\ref{eq:c_grpo} gives
\begin{equation}
J_{\text{GRPO}}(\theta) = \mathbb{E}_{q} \left[ 
  \sum_{i=1}^{G^+} \sum_{j=1}^{G^-} A_{ij} \left( \mathcal{C}_{\text{up}}(\rho_i^+) - \mathcal{C}_{\text{low}}(\rho_j^-) \right)
\right]
\label{eq:pairwise_grpo}
\end{equation}
where $A_{ij} = \sigma_q / (G^+ G^-)$ is the advantage constant shared by all pairs, which can be omitted.
The pairwise form reveals that GRPO implicitly performs contrastive optimization over all positive-negative pairs. 
However, each ratio $\rho_i = \pi_\theta(o_i|q) / \pi_{\theta_{\text{old}}}(o_i|q)$ is computed with the policy conditioned only on the original query $q$, without access to other samples in the group.

\subsection{Approximating Optimal Baseline under Importance Sampling}
\label{sec:optimal_baseline}

\textbf{Log-Probability Formulation.}
While one can use $b^\ast$ in Eq.~\eqref{eq:optimal_baseline} for policy gradient update, direct computation of the importance weight $w = \pi_\theta / \pi_{\text{ref}}$ can be numerically unstable for long sequences.
We propose to use the log-probability difference as a surrogate. Let
\begin{equation}
\delta(x, y) = \log \pi_\theta(y | x) - \log \pi_{\text{ref}}(y | x),
\label{eq:delta_def}
\end{equation}
so that $w = e^\delta$. 
Under the trust region constraint, $|\delta|$ remains small, and the first-order Taylor expansion implies that $w \approx 1 + \delta$ and $w^2 \approx 1 + 2\delta$.

\textbf{Corrected Baseline.} Given such approximation, we can approximate $b^\ast$ via
\begin{equation}
b^* = \frac{\mathbb{E}[Rw^2]}{\mathbb{E}[w^2]} \approx \frac{\mathbb{E}[R] + 2\mathbb{E}[R\delta]}{1 + 2\mathbb{E}[\delta]}
\end{equation}
The trust region constraint implies $\mathbb{E}[\delta] \approx 0$, simplifying the denominator to 1. Decomposing $\mathbb{E}[R\delta] = \text{Cov}(R, \delta) + \mathbb{E}[R]\mathbb{E}[\delta] \approx \text{Cov}(R, \delta)$ yields:
\begin{equation}
b^* \approx \mathbb{E}[R] + 2 \cdot \text{Cov}(R, \delta)
\label{eq:corrected_baseline}
\end{equation}
When $\text{Cov}(R, \delta) > 0$, the correction increases the baseline, preventing high-confidence correct samples from dominating the gradient.

\section{Method}
\label{sec:method}
Section~\ref{sec:analysis} reveals that GRPO implicitly optimizes over all positive-negative pairs, yet each sample is evaluated independently without awareness of other outputs in the group. The sparse final reward further hinders its ability from updating the most important token during the training process. 
As illustrated in Figure~\ref{fig:overview}, we propose \n{} to address the limitation by conditioning each sample on opposite-partition outputs, enabling explicit contrastive learning where right and wrong attempts inform each other (\S\ref{sec:bilateral}). 
To stabilize training under bilateral conditioning, we further introduce \m{}, which adjusts the baseline using reward-confidence covariance to reduce the variance of objective (\S\ref{sec:vr_baseline}).

\subsection{Bilateral Context Conditioning}
\label{sec:bilateral}

The pairwise contrastive form in Eq.~\ref{eq:pairwise_grpo} shows that GRPO optimizes over all positive-negative pairs $(o_i^+, o_j^-)$, yet neither of the sample observes the other during policy learning.
We propose Bilateral Context Conditioning (\n{}) to enable explicit cross-partition information flow while preserving the contrastive structure. 
% Specifically, we construct augmented contexts by concatenating the query with opposite-partition samples: $x^+ = [q; \mathcal{O}^-]$ and $x^- = [q; \mathcal{O}^+]$.
% When evaluating correct solutions, the policy observes failure reasoning trace via $\mathcal{O}^-$ in $x^+$; and when evaluating incorrect ones, it observes successful traces via $\mathcal{O}^+$ in $x^-$.
Specifically, we construct augmented contexts by concatenating the query with opposite-partition samples:
\begin{equation*}
x^+ = [q; \mathcal{O}^-], \qquad x^- = [q; \mathcal{O}^+]
\label{eq:context_aug}
\end{equation*}
When evaluating correct solutions, the policy observes failure reasoning trace via $x^+$; and when evaluating incorrect ones, it observes successful traces via $x^-$.

\textbf{Conditioned Importance Sampling Ratios.}
To enable a direct information flow across partitions, we consider the following conditioned importance sampling ratio.
\begin{equation}
\begin{aligned}
\rho_i^+ &= \frac{\pi_\theta(o_i \mid x^+)}{\pi_{\theta_{\text{old}}}(o_i \mid q)}, \quad o_i \in \mathcal{O}^+ \\
\rho_j^- &= \frac{\pi_\theta(o_j \mid x^-)}{\pi_{\theta_{\text{old}}}(o_j \mid q)}, \quad o_j \in \mathcal{O}^-.
\end{aligned}
\label{eq:cond_ratio}
\end{equation}
Opposite-partition samples serve as privileged information~\citep{lopez2015unifying} available only during training, enabling contrastive learning with zero inference overhead.

\textbf{Conditioned GRPO Objective.}
Replacing the independent ratios in Eq.~\ref{eq:c_grpo} with conditioned ones yields:
%\begin{equation}
%J_{\text{cGRPO}}(\theta) = %\mathbb{E}_{q} \left[ 
%  \sum_{i=1}^{G^+} \sum_{j=1}^{G^-} A_{ij} \left( \mathcal{C}_{\text{up}}(\rho_i^+) - \mathcal{C}_{\text{low}}(\rho_j^-) \right)
%\right]
%\label{eq:cgrpo_pairwise}
%\end{equation}
%Equivalently, the objective can be written in the averaged form:
\begin{equation}
J_{\text{cGRPO}}(\theta) = \mathbb{E}_{q} \left[ 
  \sigma_q \cdot \left( \bar{\rho}^{+}_{\text{clip}} - \bar{\rho}^{-}_{\text{clip}} \right),
\right]
\label{eq:cgrpo_obj}
\end{equation}
where $\bar{\rho}^{+}_{\text{clip}} = \frac{1}{G^+} \sum_{i \in \mathcal{O}^+} \mathcal{C}_{\text{up}}(\rho_i^+)$ and $\bar{\rho}^{-}_{\text{clip}} = \frac{1}{G^-} \sum_{j \in \mathcal{O}^-} \mathcal{C}_{\text{low}}(\rho_j^-)$.
The objective structure remains identical to standard GRPO; only the conditioning of the trainable policy changes.

\textbf{Ratio Decomposition.}
It can be seen that the conditioned ratio decomposes as $\rho_i^c = w_i \cdot \rho_i$, where 
\begin{equation}
w_i = \frac{\pi_\theta(o_i \mid q, \mathcal{O}^{\mp})}{\pi_\theta(o_i \mid q)}
\label{eq:cond_weight}
\vspace{-3mm}
\end{equation}

is a conditioning weight measuring the probability shift from observing opposite-partition samples. 
When $w_i = 1$ for all samples, cGRPO reduces to standard GRPO.
The asymmetric design preserves the trust region constraint by measuring deviation from $\pi_{\theta_{\text{old}}}(\cdot|q)$. The bilateral conditioning framework can be applied uniformly across the GRPO family. As shown in Table~\ref{tab:variants}, the adaptation requires only replacing $\pi_\theta(\cdot|q)$ with $\pi_\theta(\cdot|q, \mathcal{O}^{\mp}_i)$ while preserving each method's specific design choices.

\begin{table}[t]
\centering
\caption{\n{} applied to GRPO variants. For each method, the first row shows the standard formulation and the \colorbox{blue!5}{shaded row} shows the bilateral modification. \textcolor{green!70!black}{Green} highlights the change.}
\label{tab:variants}
\scriptsize
\renewcommand{\arraystretch}{1.3}
\setlength{\tabcolsep}{3pt}
\begin{tabular}{@{}cc@{}}
\toprule
\textbf{Ratio} $\rho_i$ & \textbf{Objective} $\mathcal{L}_i$ \\
\midrule
\multicolumn{2}{@{}l@{}}{\textbf{GRPO}~\citep{guo2025deepseek}} \\[2pt]
$\dfrac{\pi_\theta(o_i|q)}{\pi_{\theta_{\text{old}}}(o_i|q)}$ 
& $\min\bigl(\rho_i A_i, \text{clip}(\rho_i, 1{\pm}\epsilon) A_i\bigr)$ \\[6pt]
\cellcolor{blue!5}$\dfrac{\pi_\theta(o_i|\textcolor{green!70!black}{q, \mathcal{O}^{\mp}_i})}{\pi_{\theta_{\text{old}}}(o_i|q)}$ 
& \cellcolor{blue!5}$\min\bigl(\rho_i A_i, \text{clip}(\rho_i, 1{\pm}\epsilon) A_i\bigr)$ \\[4pt]
\midrule
\multicolumn{2}{@{}l@{}}{\textbf{Dr.GRPO}~\citep{liu2025understanding} }\\[2pt]
$\dfrac{\pi_\theta(o_i|q)}{\pi_{\text{ref}}(o_i|q)}$ 
& $\min\bigl(\rho_i A_i, \text{clip}(\rho_i, 1{\pm}\epsilon) A_i\bigr) - \beta \mathbb{D}_{\textcolor{green!70!black}{\text{KL}}}$ \\[6pt]
\cellcolor{blue!5}$\dfrac{\pi_\theta(o_i|\textcolor{green!70!black}{q, \mathcal{O}^{\mp}_i})}{\pi_{\text{ref}}(o_i|q)}$ 
& \cellcolor{blue!5}$\min\bigl(\rho_i A_i, \text{clip}(\rho_i, 1{\pm}\epsilon) A_i\bigr) - \beta \mathbb{D}_{\textcolor{green!70!black}{\text{KL}}}$ \\[4pt]
\midrule
\multicolumn{2}{@{}l@{}}{\textbf{DAPO}~\citep{yu2025dapo} }\\[2pt]
$\dfrac{\pi_\theta(o_i|q)}{\pi_{\theta_{\text{old}}}(o_i|q)}$ 
& $\min\bigl(\rho_i A_i, \text{clip}(\rho_i, 1{-}\epsilon_l, 1{+}\epsilon_h) A_i\bigr)$ \\[6pt]
\cellcolor{blue!5}$\dfrac{\pi_\theta(o_i|\textcolor{green!70!black}{q, \mathcal{O}^{\mp}_i})}{\pi_{\theta_{\text{old}}}(o_i|q)}$ 
& \cellcolor{blue!5}$\min\bigl(\rho_i A_i, \text{clip}(\rho_i, 1{-}\epsilon_l, 1{+}\epsilon_h) A_i\bigr)$ \\[4pt]
\midrule
\multicolumn{2}{@{}l@{}}{\textbf{GSPO}~\citep{zheng2025group} }\\[2pt]
$\exp\Bigl(\dfrac{1}{|o_i|}\sum_t \log \dfrac{\pi_\theta(o_{i,t}|q)}{\pi_{\theta_{\text{old}}}(o_{i,t}|q)}\Bigr)$ 
& $\rho_i^{1/|o_i|} \cdot \text{sg}(\rho_i)^{1-1/|o_i|} \cdot A_i$ \\[6pt]
\cellcolor{blue!5}$\exp\Bigl(\dfrac{1}{|o_i|}\sum_t \log \dfrac{\pi_\theta(o_{i,t}|\textcolor{green!70!black}{q, \mathcal{O}^{\mp}_i})}{\pi_{\theta_{\text{old}}}(o_{i,t}|q)}\Bigr)$ 
& \cellcolor{blue!5}$\rho_i^{1/|o_i|} \cdot \text{sg}(\rho_i)^{1-1/|o_i|} \cdot A_i$ \\[4pt]
\bottomrule
\end{tabular}
\vspace{1mm}
\begin{flushleft}
\scriptsize
$\mathcal{O}^{\mp}_i$: opposite-partition samples for $o_i$. $\text{sg}(\cdot)$: stop-gradient. $\mathbb{D}_{\text{KL}}$: KL divergence from reference policy.
\end{flushleft}
\vspace{-3mm}
\end{table}

\subsection{Reward-Confidence Correction}
\label{sec:vr_baseline}

Standard GRPO uses the group mean reward as the baseline, which is optimal when importance weights are independent of rewards. 
However, the independence assumption rarely holds in practice since the model typically assigns higher probability to outputs it considers correct, inducing correlation between rewards and policy confidence. 
Figure~\ref{fig:cov} provides empirical evidence: $\text{Cov}(R, \delta)$ where $\delta = \log\pi_\theta - \log\pi_{\text{ref}}$ increases monotonically throughout training, reaching 0.066 for Qwen3-4B and 0.138 for Phi-4-mini. 
The distribution of $\delta$ shows clear separation between correct and incorrect samples, with mean differences of 0.27 and 0.56 respectively.

\begin{figure}[htbp]
    \centering
    \includegraphics[width=\columnwidth]{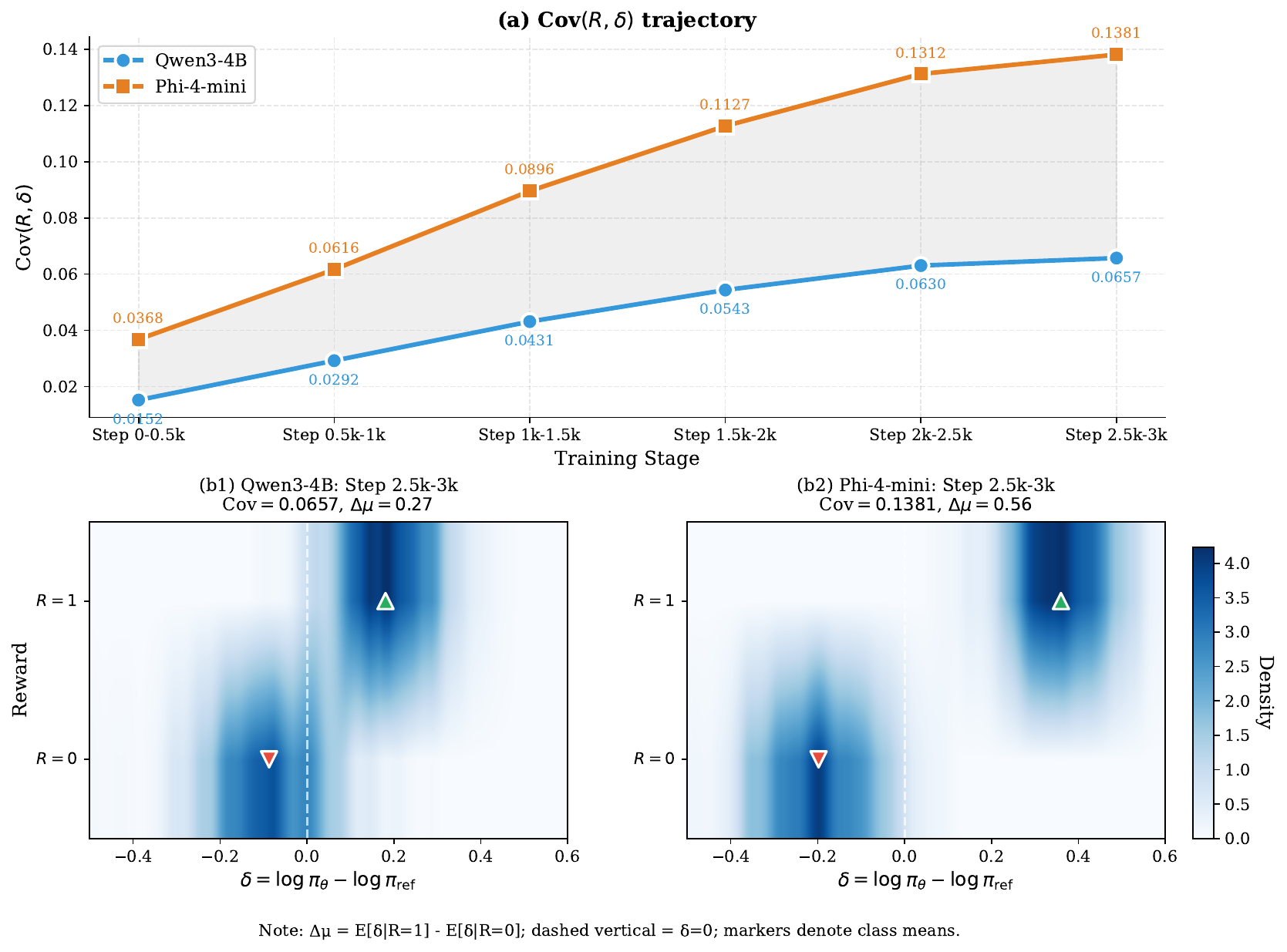}
    \caption{Reward-confidence correlation analysis. $\text{Cov}(R, \delta)$ increases throughout training for both models. Distribution of $\delta$ at training steps 2.5k--3k shows clear separation by reward, with mean separation $\Delta_\mu = 0.27$ for Qwen3-4B and $0.56$ for Phi-4-mini.}
    \label{fig:cov}
\end{figure}

\textbf{Covariance-Based Correction.}
Section~\ref{sec:optimal_baseline} derived the approximation of the variance-minimizing baseline under importance sampling:
\begin{equation}
b^* \approx \mathbb{E}[R] + 2 \cdot \text{Cov}(R, \delta)
\label{eq:optimal_baseline_method}
\end{equation}
The correction term $2 \cdot \text{Cov}(R, \delta)$ adjusts the baseline upward when the model assigns higher probability to correct outputs, which prevents high-confidence correct samples from dominating the gradient, thereby reducing variance.
We refer to the covariance-based adjustment as Reward-Confidence Correction (\m{}).

\textbf{Corrected Advantage.}
We estimate the covariance within each sampled group using sample statistics:
\begin{equation}
\widehat{\text{Cov}}(R, \delta) = \frac{1}{G} \sum_{i=1}^{G} (r_i - \bar{R})(\delta_i - \bar{\delta}),
\label{eq:cov_estimate}
\end{equation}
where $\bar{R} = \frac{1}{G}\sum_i r_i$ and $\bar{\delta} = \frac{1}{G}\sum_i \delta_i$. The corrected advantage is then:
\begin{equation}
A_i^{\text{RCC}} = r_i - \bar{R} - 2 \cdot \widehat{\text{Cov}}(R, \delta)
\label{eq:rcc_advantage}
\end{equation}

A notable departure from standard GRPO is the omission of $\sigma$-normalization. Standard GRPO computes $A_i = (r_i - \bar{R}) / \sigma$ to standardize advantages across groups with different reward variances. However, the covariance correction already provides adaptive scaling: when reward variance is high, $|\text{Cov}(R, \delta)|$ tends to be larger, naturally adjusting the correction magnitude. Empirically, combining \m{} with $\sigma$-normalization leads to over-regularization.

The computational overhead is negligible because both $\{r_i\}$ and $\{\log \pi_\theta(o_i|q)\}$ are already computed in standard GRPO, and the covariance estimate requires only $O(G)$ additional operations per query.

\begin{algorithm}[!t]
\caption{\n{} with \m{}}
\label{alg:unified}
\begin{algorithmic}[1]
\small
\REQUIRE Dataset $\mathcal{D}$, policy $\pi_\theta$, reference $\pi_{\text{ref}}$, group size $G$
\FOR{each query $q \in \mathcal{D}$}
    \STATE \textcolor{blue!70}{\textit{\# Sampling and partitioning}}
    \STATE Sample $\{o_i\}_{i=1}^G \sim \pi_{\theta_{\text{old}}}(\cdot|q)$, obtain rewards $\{r_i\}$
    \STATE $\mathcal{O}^+ \leftarrow \{o_i : r_i = 1\}$, \quad $\mathcal{O}^- \leftarrow \{o_j : r_j = 0\}$
    \STATE \textcolor{blue!70}{\textit{\# \n{}: Bilateral Context Conditioning}}
    \STATE $x^+ \leftarrow [q; \mathcal{O}^-]$, \quad $x^- \leftarrow [q; \mathcal{O}^+]$
    \FOR{$o_i \in \mathcal{O}^+ \cup \mathcal{O}^-$}
        \STATE $x^c \leftarrow x^+$ if $o_i \in \mathcal{O}^+$ else $x^-$
        \STATE $\rho_i^c \leftarrow \pi_\theta(o_i|x^c) \,/\, \pi_{\theta_{\text{old}}}(o_i|q)$
        \STATE $\delta_i \leftarrow \log \pi_\theta(o_i|x^c) - \log \pi_{\text{ref}}(o_i|q)$
    \ENDFOR
    \STATE \textcolor{blue!70}{\textit{\# \m{}: Reward-Confidence Correction}}
    \STATE $\bar{R} \leftarrow \frac{1}{G}\sum_i r_i$, \quad $\bar{\delta} \leftarrow \frac{1}{G}\sum_i \delta_i$
    \STATE $\widehat{\text{Cov}} \leftarrow \frac{1}{G}\sum_i (r_i - \bar{R})(\delta_i - \bar{\delta})$
    \STATE $A_i^{\text{RCC}} \leftarrow r_i - \bar{R} - 2 \cdot \widehat{\text{Cov}}, \quad \forall i$
    \STATE \textcolor{blue!70}{\textit{\# Policy update}}
    \STATE Update $\theta$ using $\{\rho_i^c, A_i^{\text{RCC}}\}$ with clipped surrogate objective
\ENDFOR
\STATE \textbf{return} $\pi_\theta$
\end{algorithmic}
\end{algorithm}
\vspace{-2mm}

Algorithm~\ref{alg:unified} presents the complete procedure. 
\n{} operates on $\rho$ by incorporating cross-partition information into policy evaluation, while \m{} refines the advantage by accounting for reward-confidence correlation. Operating on different components, the two mechanisms can be applied independently to any GRPO variant or combined in a single training procedure.
The key modifications to standard GRPO are: constructing bilateral contexts for cross-conditioned ratio computation and replacing the group-normalized advantage with the covariance-corrected estimate. 
At inference time, the model generates from the original prompt $q$ alone, as \n{} serves only to shape the training signal, incurring zero additional overhead.

\begin{table*}[!t]
\centering
\caption{\textbf{Main results on mathematical reasoning benchmarks.} We report the average Pass@1 accuracy (\%) over 32 independent runs using Qwen3-4B and Phi-4-mini. \textbf{\n} denotes our \textbf{Bi}lateral \textbf{C}onditioning extension applied to various GRPO-based baselines. \colorbox{rowblue}{Shaded rows} represent the proposed variants, with subscripts indicating the absolute performance change relative to the corresponding base method. \n{} yields consistent gain of 0.3--1.9 percentage points across settings, with larger gains on weaker base models. }
\label{tab:main_results}
\setlength{\tabcolsep}{4.5pt}
\renewcommand{\arraystretch}{1.2}
\footnotesize
\begin{tabular}{l cccc cccc}
\toprule
& \multicolumn{4}{c}{\textbf{Qwen3-4B-Instruct-2507}} 
& \multicolumn{4}{c}{\textbf{Phi-4-mini-instruct-3.8B}} \\
\cmidrule(lr){2-5}\cmidrule(lr){6-9}
\textbf{Method} 
& \small Math500 & \small AMC 23 & \small AIME24 & \small AIME25 
& \small Math500 & \small AMC 23 & \small AIME24 & \small AIME25 \\
\midrule
SFT Baseline
& 89.2 & 69.4 & 53.8 & 45.4
& 72.3 & 36.5 & 9.8 & 4.3 \\
\midrule
\multicolumn{9}{l}{\textit{Standard GRPO Baseline}} \\
GRPO ($G=2$)
& 90.6 & 70.2 & 53.5 & 45.8 
& 75.8 & 39.2 & 10.2 & 4.5 \\
\oursrow \textbf{\n~GRPO}
& 91.0\inc{0.4} & 71.4\inc{1.2} & 54.2\inc{0.7} & 46.2\inc{0.4}
& 76.4\inc{0.6} & 40.5\inc{1.3} & 10.5\inc{0.3} & 4.8\inc{0.3} \\
\addlinespace[3pt]
GRPO ($G=8$)
& 91.4 & 72.8 & 54.0 & 46.4
& 76.2 & 42.5 & 10.4 & 4.5 \\
\oursrow \textbf{\n~GRPO}
& 92.2\inc{0.8} & 74.2\inc{1.4} & 54.6\inc{0.6} & 47.1\inc{0.7}
& 78.1\inc{1.9} & 44.3\inc{1.8} & 11.2\inc{0.8} & 5.1\inc{0.6} \\
\midrule
\multicolumn{9}{l}{\textit{Generalization to GRPO Family ($G=8$)}} \\
Dr.GRPO~\citep{liu2025understanding}
& 91.5 & 72.9 & 53.9 & 46.5
& 76.4 & 42.6 & 10.5 & 4.6 \\
\oursrow \textbf{\n~Dr.GRPO}
& 92.1\inc{0.6} & 74.0\inc{1.1} & 54.5\inc{0.6} & 47.0\inc{0.5}
& 78.2\inc{1.8} & 44.2\inc{1.6} & 11.1\inc{0.6} & 4.5\dec{0.1} \\ 
\addlinespace[3pt]
ASPO~\citep{wang2025aspo}
& 91.6 & 72.6 & 54.1 & 46.3
& 76.5 & 42.4 & 10.5 & 4.5 \\
\oursrow \textbf{\n~ASPO}
& 91.9\inc{0.3} & 73.9\inc{1.3} & 54.7\inc{0.6} & 46.9\inc{0.6} 
& 78.3\inc{1.8} & 44.1\inc{1.7} & 11.3\inc{0.8} & 5.1\inc{0.6} \\
\addlinespace[3pt]
GMPO~\citep{zhao2025geometric}
& 91.8 & 73.5 & 54.2 & 46.5
& 76.0 & 42.1 & 10.3 & 4.4 \\
\oursrow \textbf{\n~GMPO}
& 92.5\inc{0.7} & 74.6\inc{1.1} & 54.9\inc{0.7} & 47.2\inc{0.7}
& 77.8\inc{1.8} & 42.0\dec{0.1} & 11.0\inc{0.7} & 5.0\inc{0.6} \\ 
\addlinespace[3pt]
DAPO~\citep{yu2025dapo}
& 92.5 & 74.0 & 55.2 & 47.5
& 77.5 & 43.8 & 11.5 & 4.9 \\
\oursrow \textbf{\n~DAPO}
& \textbf{93.1}\inc{0.6} & 75.2\inc{1.2} & \textbf{55.8}\inc{0.6} & 48.0\inc{0.5}
& 79.0\inc{1.5} & \textbf{45.2}\inc{1.4} & 12.1\inc{0.6} & \textbf{5.4}\inc{0.5} \\
\addlinespace[3pt]
GSPO~\citep{zheng2025group}
& 92.3 & 74.2 & 55.0 & 47.7
& 77.8 & 43.5 & 11.6 & 4.8 \\
\oursrow \textbf{\n~GSPO}
& 92.9\inc{0.6} & \textbf{75.4}\inc{1.2} & 55.5\inc{0.5} & \textbf{48.3}\inc{0.6}
& \textbf{79.2}\inc{1.4} & 44.9\inc{1.4} & \textbf{12.3}\inc{0.7} & 5.3\inc{0.5} \\
\bottomrule
\end{tabular}
\vspace{0.5em}
\begin{flushleft}
\scriptsize
\textbf{Note:} All generalization experiments utilize a fixed group size of $G=8$. \textbf{Bold} indicates the highest performance per column. Subscripts denote absolute improvement (\textcolor{teal}{positive}) or decline (\textcolor{red}{negative}) compared to the respective baseline.
\end{flushleft}
\end{table*}

\section{Result}

\begin{figure}[!t]
    \centering
    \begin{subfigure}[b]{1.0\columnwidth}
        \centering
        \includegraphics[width=\columnwidth]{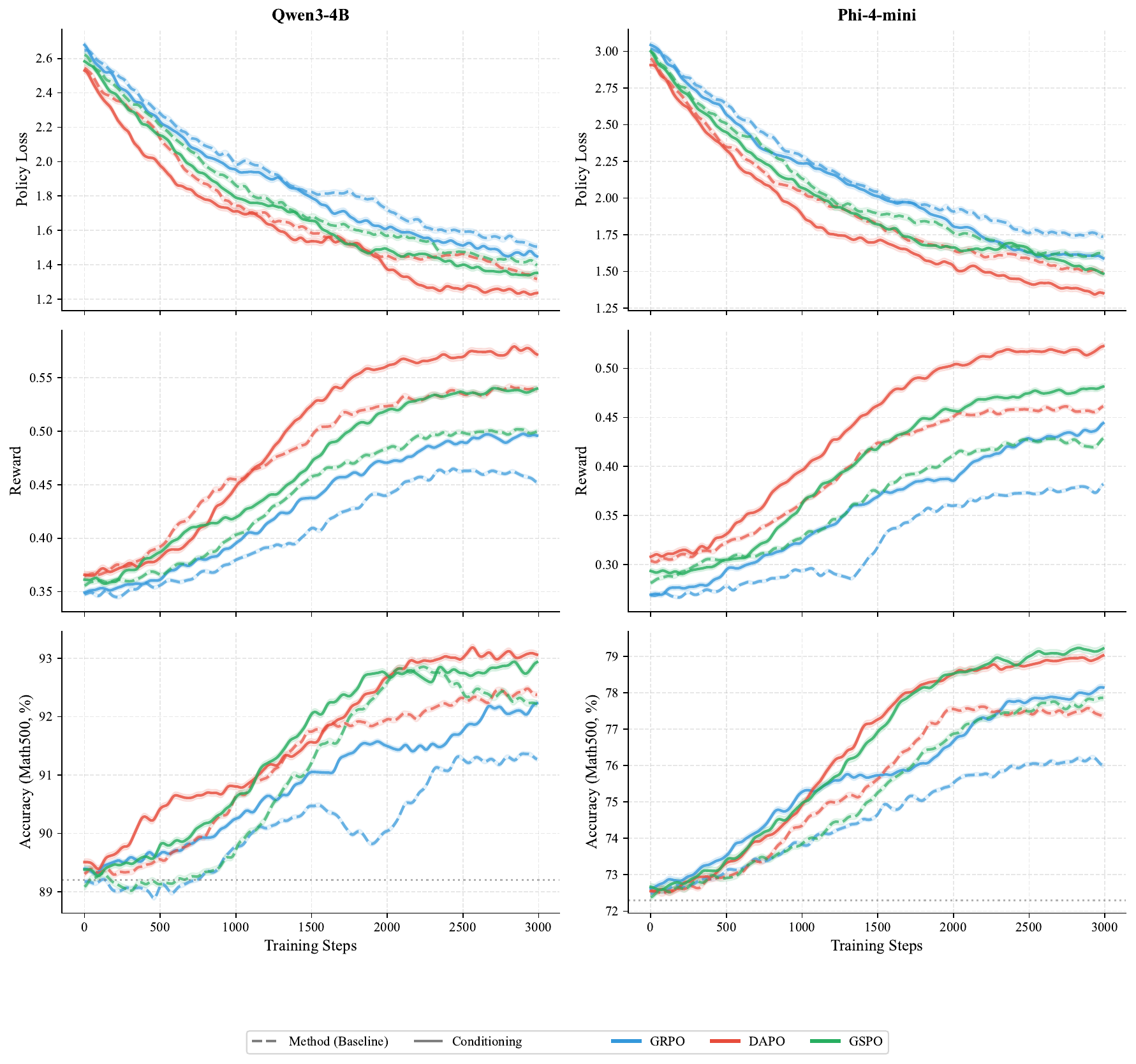}
        \vspace{-4mm}
        \caption{Training dynamics on Qwen3-4B and Phi-4-mini. Policy loss decreases steadily for all methods while average reward improves throughout training. Solid lines denote bilateral conditioning variants; dashed lines denote corresponding baselines.}
        \label{fig:train}
    \end{subfigure}
    \begin{subfigure}[b]{1.0\columnwidth}
        \centering
        \includegraphics[width=\columnwidth]{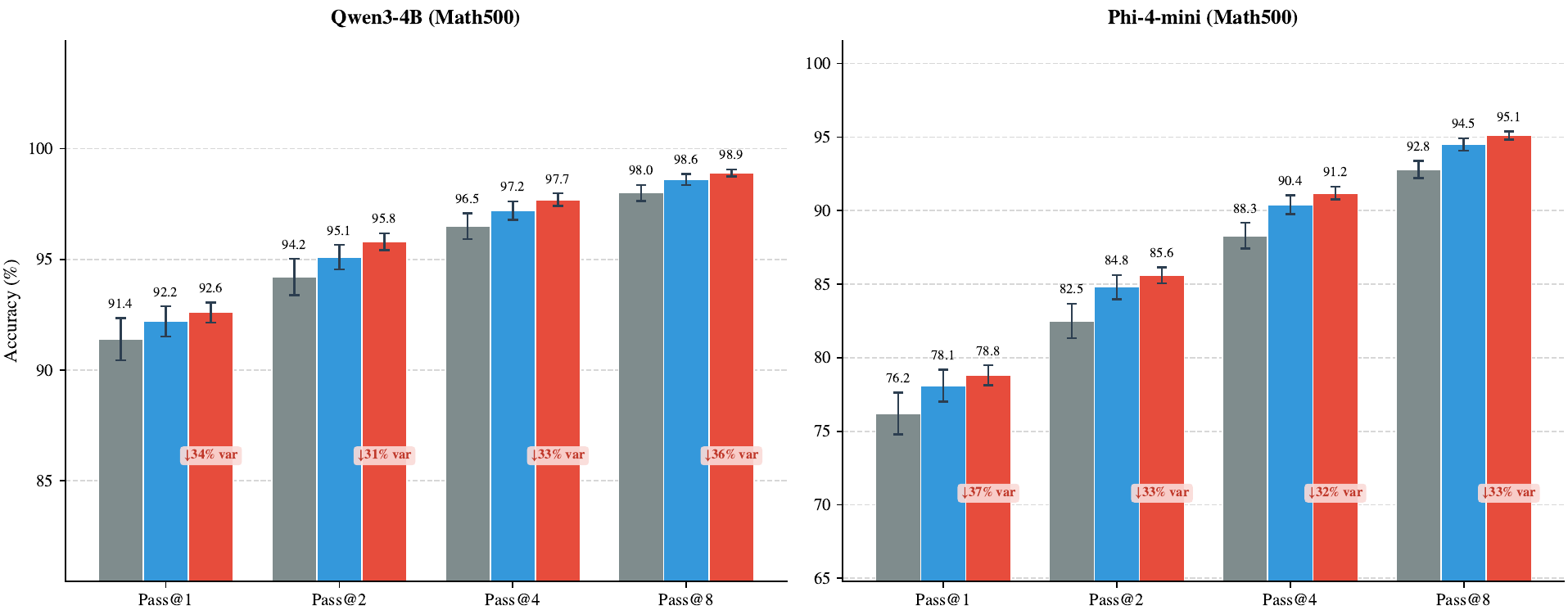}
        \vspace{-4mm}
        \caption{Pass@k accuracy on Math500 for Qwen3-4B and Phi-4-mini. \n{} denotes Bilateral Context Conditioning and \m{} denotes Reward-Confidence Correction. Error bars indicate $\pm 1$ standard deviation over 32 runs. Red annotations show the percentage of variance reduction achieved by \n{}-GRPO-\m{} relative to the GRPO baseline.}
        \label{fig:passk}
    \end{subfigure}
    \vspace{-4mm}
    \caption{Training and evaluation performance.}
    \label{fig:combined}
    \vspace{-5mm}  
\end{figure}

\subsection{Experimental Setup}

\textbf{Models and Data.}
We evaluate on two instruction-tuned models of comparable scale: Qwen3-4B-Instruct-2507 and Phi-4-mini-instruct-3.8B. 
Both models demonstrate strong baseline performance on mathematical reasoning while remaining computationally tractable for extensive experimentation.
For training, we use the DAPO-Math-17k dataset~\citep{yu2025dapo} containing approximately 17K mathematical problems, each paired with a ground-truth integer answer, formatted for a binary reward assignment based on the correctness of the answer.

\textbf{Baselines and Benchmarks.}
We apply \n{} to GRPO and recent variants, including Dr.GRPO which uses a fixed reference policy in the denominator; DAPO which employs asymmetric clipping bounds; and ASPO, GMPO, GSPO, which modify advantage estimation or probability aggregation. 
All methods share the group-based structure that our conditioning mechanism targets. 
For evaluation, we use four mathematical reasoning benchmarks of increasing difficulty: Math500~\citep{hendrycks2021measuring}; AMC 2023 from the American Mathematics Competition~\citep{cobbe2021training}; and AIME 2024/2025 from the American Invitational Mathematics Examination~\citep{aime25}.

\textbf{Implementation.}
All experiments are conducted on 4$\times$A100 GPUs. 
Models are trained with batchsize 8 and global batchsize 32, learning rate $1 \times 10^{-6}$ in AdamW optimizer. 
We use group size $G=8$ and clipping parameter $\epsilon=0.2$ unless otherwise noted. For \n{}, we concatenate opposite-partition samples to the input context; the effect of context length allocation is analyzed in Section~\ref{sec:ablation}. 
When $G^+ \neq G^-$, minority-partition samples are matched with multiple majority-partition samples. 
The presented results report Pass@1 accuracy, averaged over 32 evaluation runs with different random seeds.

\subsection{Main Results}

Table~\ref{tab:main_results} presents the comprehensive performance of \n{} applied to GRPO and its variants, demonstrating that conditioning on opposite-partition samples yields consistent improvements across all settings.
Increasing group size from $G=2$ to $G=8$ amplifies the effect: on Qwen3-4B, the improvement on Math500 grows from +0.4\% to +0.8\%, and on Phi-4-mini from +0.6\% to +1.9\%. The result aligns with the intuition that larger groups provide richer contrastive information, as the opposite-partition context $\mathcal{O}^\mp$ becomes more representative of failure or success modes. 
The improvements are also more pronounced on Phi-4-mini than on Qwen3-4B, suggesting that weaker base models benefit more from explicit contrastive signals.

The conditioning mechanism generalizes across GRPO variants. 
Whether applied to Dr.GRPO, DAPO, GMPO, or GSPO, the gains remain consistent, indicating that \n{} addresses a fundamental limitation of group-based optimization rather than artifacts of any specific algorithm. 
Among all combinations, \n{}-DAPO and \n{}-GSPO achieve the highest overall performance, reaching 93.1\% and 79.2\% on Math500 for Qwen3-4B and Phi-4-mini respectively.

Figure~\ref{fig:combined} illustrates the training behavior and evaluation performance. As shown in Figure~\ref{fig:train}, conditioned variants exhibit comparable or faster convergence relative to their baselines, with no indication of training instability despite the longer context.
Figure~\ref{fig:passk} demonstrates the effectiveness of \m{}. Building upon \n{}-GRPO, incorporating \m{} yields additional gains across all Pass@$k$ values while reducing gradient variance by 31--36\% on Qwen3-4B and 32--37\% on Phi-4-mini, demonstrating that \m{} effectively stabilizes the gradient estimation.

\subsection{Ablation and Analysis}
\label{sec:ablation}

\textbf{Context Length Allocation.}
We investigate the impact of context length allocation on opposite-partition samples by changing the proportion of the maximum context length allocated to conditional samples, and evaluate on Math500 using \n{}-GRPO with $G=8$.
Allocating too little context limits the amount of contrastive information available, while allocating too much may dilute the original query signal. 
Table~\ref{tab:ablation_context} suggests that 40\% provides a reasonable balance, though performance remains relatively stable across the range.
It is worth noting that Phi-4-mini is more sensitive to context assignment than Qwen3-4B, with a performance gap of 0.7\% between the best and worst settings, compared to 0.4\% for Qwen3-4B.
The observation suggests that models with weaker initial capabilities rely more heavily on contrastive signals, making the quality of the opposite-partition context more critical.

\begin{table}[h]
\centering
\vspace{-2mm}
\caption{Effect of context length allocation for opposite-partition samples on Math500 accuracy.}
\label{tab:ablation_context}
\small
\begin{tabular}{lcc}
\toprule
Context Ratio & Qwen3-4B & Phi-4-mini \\
\midrule
20\% & 91.8 & 77.4 \\
40\% & 92.2 & 78.1 \\
60\% & 92.0 & 77.8 \\
\bottomrule
\end{tabular}
\vspace{-3mm}
\end{table}

\textbf{Reward-Confidence Correlation.}
Section~\ref{sec:vr_baseline} established that $\text{Cov}(R, \delta)>0$ motivates \m{}, which is validated in Figure~\ref{fig:passk}.
We further examine how the correlation evolves throughout training. Figure~\ref{fig:diff} shows the distribution of $\delta = \log\pi_\theta - \log\pi_{\text{ref}}$ at different training stages, separated by reward. 
The separation between $R=0$ and $R=1$ partitions widens progressively: early in training, the distributions largely overlap, but correct and incorrect samples form distinct clusters by steps 2.5k--3k. 
Phi-4-mini exhibits larger separation ($\Delta_\mu = 0.56$) than Qwen3-4B ($\Delta_\mu = 0.27$), indicating that the weaker model develops a more pronounced confidence gap between correct and incorrect outputs. 

The analysis suggests that \m{} provides greater benefits when the base model has higher uncertainty, as the covariance correction becomes more substantial. Applying \m{} to \n{}-GRPO reduces gradient variance by approximately 25--30\% while maintaining accuracy, translating to 15--20\% faster convergence.

\begin{figure}[!t]
    \centering
    \includegraphics[width=\columnwidth]{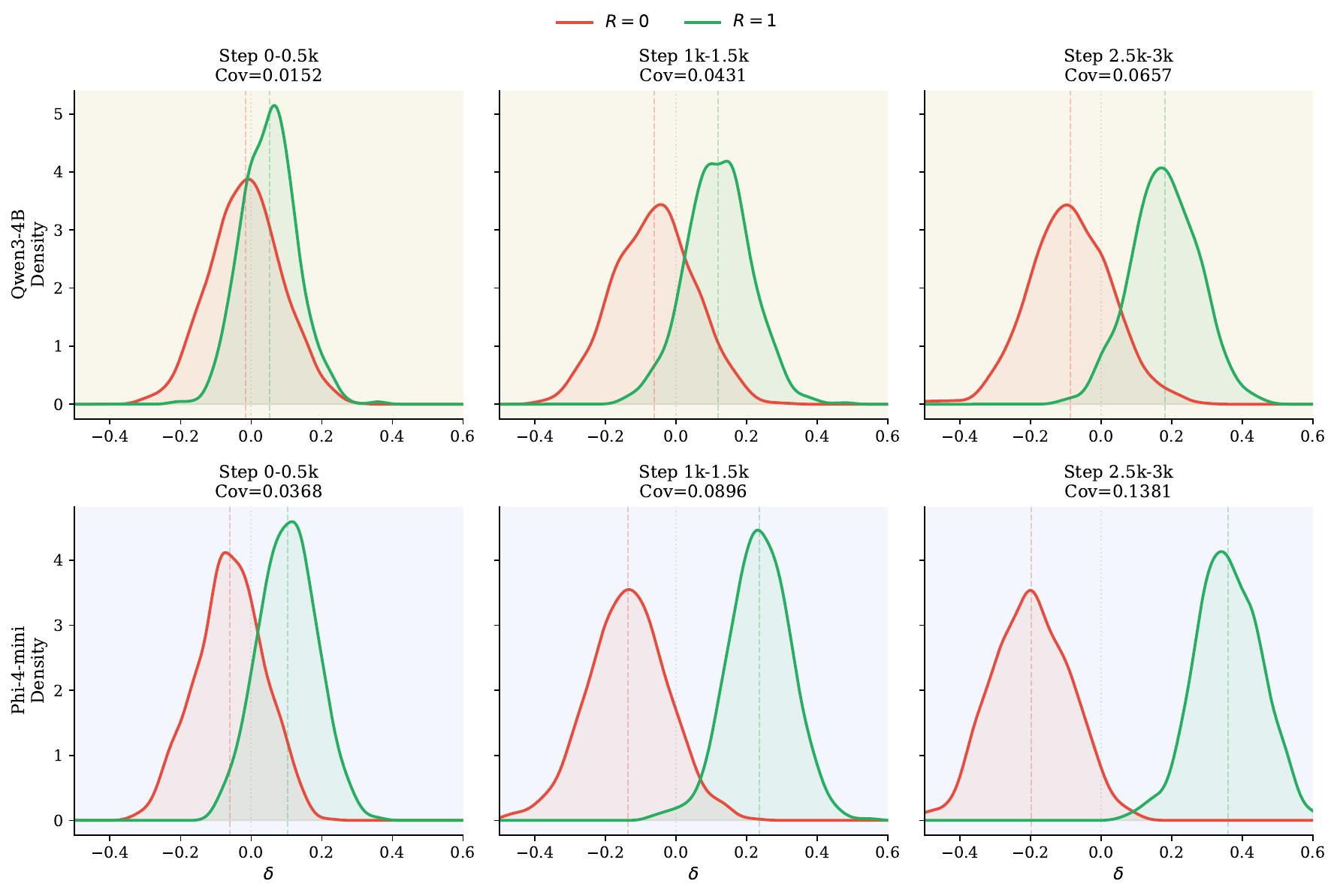}
    \vspace{-3mm}
    \caption{Evolution of $\delta$ distributions during training. The separation between $R=0$ and $R=1$ partitions widens as training progresses, reflecting the model's increasing ability to assign higher probability to correct outputs.}
    \label{fig:diff}
    \vspace{-3mm}
\end{figure}

\section{Related Work}

\textbf{Reinforcement Learning for LLM Alignment.}
The application of reinforcement learning to large language models has evolved from early RLHF approaches~\citep{christiano2017deep,ouyang2022training} to recent reasoning-focused training. Chain-of-thought prompting~\citep{wei2022chain} demonstrated that intermediate reasoning steps substantially improve performance, motivating RL methods such as those used in OpenAI's o1~\citep{jaech2024openai} and DeepSeek-R1~\citep{guo2025deepseek}. 
Contrastive approaches such as DPO~\citep{rafailov2023direct} and IPO~\citep{saeidi2025insights} frame alignment as preference optimization, bypassing reward modeling~\cite{ethayarajh2024kto,azar2024general}.
GRPO~\citep{shao2024deepseekmath} addresses PPO's computational overhead by eliminating the critic network and computing advantages through group-wise normalization. Variants include Dr.GRPO~\citep{liu2025understanding}(fixed reference policy), DAPO~\citep{yu2025dapo} (asymmetric clipping), and GSPO~\citep{zheng2025group}(geometric averaging). 
Existing methods evaluate each sample independently without leveraging contrastive signals between correct and incorrect outputs. We addresses this by conditioning on the opposite partition and integrates into any GRPO variant.

\textbf{Variance Reduction in Policy Gradients.}
Variance reduction through baselines is classical in policy gradient methods~\citep{williams1992simple}. The optimal baseline depends on the sampling distribution and generally differs from the simple reward mean~\citep{greensmith2004variance,weaver2013optimal}. 
Generalized Advantage Estimation~\citep{schulman2015high} provides a practical bias-variance tradeoff, while RLOO~\citep{ahmadian2024back} revisits REINFORCE-style optimization. 
Recent work derives theoretically optimal baselines under simplifying assumptions~\citep{hao2025policy} or replaces group-based estimation with persistent global trackers~\citep{xu2025single}.
Existing approaches assume independence between importance weights and rewards, which is increasingly violated as training progresses. Our method accounts for the resulting correlation via a covariance correction term.

\section{Conclusion}
We presented a contrastive reformulation of GRPO that reveals an unexploited structure in group-based policy optimization: within each sampled group, correct and incorrect outputs naturally partition into contrastive subsets that could inform each other during training. 
Building on the insight, we proposed Bilateral Context Conditioning, which enables cross-partition information flow where right and wrong attempts finally meet, grounded in the Learning Using Privileged Information framework. 
To stabilize training, we further introduced Reward-Confidence Correction, which exploits reward-confidence correlation for more accurate gradient estimation. 
Both mechanisms integrate seamlessly into existing GRPO variants with minimal overhead. 
Experiments on mathematical reasoning benchmarks demonstrate consistent improvements of 0.3--1.9 percentage points, with larger gains on weaker base models. 
Future work includes extending the methods to tasks with continuous rewards and other reasoning domains such as code generation.

\clearpage
\section*{Impact Statement}
This paper presents work whose goal is to advance the field of Machine
Learning. There are many potential societal consequences of our work, none
which we feel must be specifically highlighted here.

\bibliography{ref}

@article{wei2022chain,
  title={Chain-of-thought prompting elicits reasoning in large language models},
  author={Wei, Jason and Wang, Xuezhi and Schuurmans, Dale and Bosma, Maarten and Xia, Fei and Chi, Ed and Le, Quoc V and Zhou, Denny and others},
  journal={Advances in neural information processing systems},
  volume={35},
  pages={24824--24837},
  year={2022}
}

@article{ouyang2022training,
  title={Training language models to follow instructions with human feedback},
  author={Ouyang, Long and Wu, Jeffrey and Jiang, Xu and Almeida, Diogo and Wainwright, Carroll and Mishkin, Pamela and Zhang, Chong and Agarwal, Sandhini and Slama, Katarina and Ray, Alex and others},
  journal={Advances in neural information processing systems},
  volume={35},
  pages={27730--27744},
  year={2022}
}

@article{shao2024deepseekmath,
  title={Deepseekmath: Pushing the limits of mathematical reasoning in open language models},
  author={Shao, Zhihong and Wang, Peiyi and Zhu, Qihao and Xu, Runxin and Song, Junxiao and Bi, Xiao and Zhang, Haowei and Zhang, Mingchuan and Li, YK and Wu, Yang and others},
  journal={arXiv preprint arXiv:2402.03300},
  year={2024}
}

@article{guo2025deepseek,
  title={Deepseek-r1: Incentivizing reasoning capability in llms via reinforcement learning},
  author={Guo, Daya and Yang, Dejian and Zhang, Haowei and Song, Junxiao and Zhang, Ruoyu and Xu, Runxin and Zhu, Qihao and Ma, Shirong and Wang, Peiyi and Bi, Xiao and others},
  journal={arXiv preprint arXiv:2501.12948},
  year={2025}
}

@article{schulman2017proximal,
  title={Proximal policy optimization algorithms},
  author={Schulman, John and Wolski, Filip and Dhariwal, Prafulla and Radford, Alec and Klimov, Oleg},
  journal={arXiv preprint arXiv:1707.06347},
  year={2017}
}

@article{williams1992simple,
  title={Simple statistical gradient-following algorithms for connectionist reinforcement learning},
  author={Williams, Ronald J},
  journal={Machine learning},
  volume={8},
  number={3},
  pages={229--256},
  year={1992},
  publisher={Springer}
}

@article{sutton1999policy,
  title={Policy gradient methods for reinforcement learning with function approximation},
  author={Sutton, Richard S and McAllester, David and Singh, Satinder and Mansour, Yishay},
  journal={Advances in neural information processing systems},
  volume={12},
  year={1999}
}

@inproceedings{schulman2015trust,
  title={Trust region policy optimization},
  author={Schulman, John and Levine, Sergey and Abbeel, Pieter and Jordan, Michael and Moritz, Philipp},
  booktitle={International conference on machine learning},
  pages={1889--1897},
  year={2015},
  organization={PMLR}
}

@inproceedings{kakade2002approximately,
  title={Approximately optimal approximate reinforcement learning},
  author={Kakade, Sham and Langford, John},
  booktitle={Proceedings of the nineteenth international conference on machine learning},
  pages={267--274},
  year={2002}
}

@article{schulman2015high,
  title={High-dimensional continuous control using generalized advantage estimation},
  author={Schulman, John and Moritz, Philipp and Levine, Sergey and Jordan, Michael and Abbeel, Pieter},
  journal={arXiv preprint arXiv:1506.02438},
  year={2015}
}

@article{greensmith2004variance,
  title={Variance reduction techniques for gradient estimates in reinforcement learning},
  author={Greensmith, Evan and Bartlett, Peter L and Baxter, Jonathan},
  journal={Journal of Machine Learning Research},
  volume={5},
  number={Nov},
  pages={1471--1530},
  year={2004}
}

@article{mroueh2025revisiting,
  title={Revisiting Group Relative Policy Optimization: Insights into On-Policy and Off-Policy Training},
  author={Mroueh, Youssef and Dupuis, Nicolas and Belgodere, Brian and Nitsure, Apoorva and Rigotti, Mattia and Greenewald, Kristjan and Navratil, Jiri and Ross, Jerret and Rios, Jesus},
  journal={arXiv preprint arXiv:2505.22257},
  year={2025}
}

@article{li2025inspo,
  title={Inspo: Unlocking intrinsic self-reflection for llm preference optimization},
  author={Li, Yu and Lan, Tian and Qi, Zhengling},
  journal={arXiv preprint arXiv:2512.23126},
  year={2025}
}

@article{christiano2017deep,
  title={Deep reinforcement learning from human preferences},
  author={Christiano, Paul F and Leike, Jan and Brown, Tom and Martic, Miljan and Legg, Shane and Amodei, Dario},
  journal={Advances in neural information processing systems},
  volume={30},
  year={2017}
}

@article{jaech2024openai,
  title={Openai o1 system card},
  author={Jaech, Aaron and Kalai, Adam and Lerer, Adam and Richardson, Adam and El-Kishky, Ahmed and Low, Aiden and Helyar, Alec and Madry, Aleksander and Beutel, Alex and Carney, Alex and others},
  journal={arXiv preprint arXiv:2412.16720},
  year={2024}
}

@article{rafailov2023direct,
  title={Direct preference optimization: Your language model is secretly a reward model},
  author={Rafailov, Rafael and Sharma, Archit and Mitchell, Eric and Manning, Christopher D and Ermon, Stefano and Finn, Chelsea},
  journal={Advances in neural information processing systems},
  volume={36},
  pages={53728--53741},
  year={2023}
}

@inproceedings{azar2024general,
  title={A general theoretical paradigm to understand learning from human preferences},
  author={Azar, Mohammad Gheshlaghi and Guo, Zhaohan Daniel and Piot, Bilal and Munos, Remi and Rowland, Mark and Valko, Michal and Calandriello, Daniele},
  booktitle={International Conference on Artificial Intelligence and Statistics},
  pages={4447--4455},
  year={2024},
  organization={PMLR}
}

@article{ethayarajh2024kto,
  title={Kto: Model alignment as prospect theoretic optimization},
  author={Ethayarajh, Kawin and Xu, Winnie and Muennighoff, Niklas and Jurafsky, Dan and Kiela, Douwe},
  journal={arXiv preprint arXiv:2402.01306},
  year={2024}
}

@article{lopez2015unifying,
  title={Unifying distillation and privileged information},
  author={Lopez-Paz, David and Bottou, L{\'e}on and Sch{\"o}lkopf, Bernhard and Vapnik, Vladimir},
  journal={arXiv preprint arXiv:1511.03643},
  year={2015}
}

@article{ahmadian2024back,
  title={Back to basics: Revisiting reinforce style optimization for learning from human feedback in llms},
  author={Ahmadian, Arash and Cremer, Chris and Gall{\'e}, Matthias and Fadaee, Marzieh and Kreutzer, Julia and Pietquin, Olivier and {\"U}st{\"u}n, Ahmet and Hooker, Sara},
  journal={arXiv preprint arXiv:2402.14740},
  year={2024}
}

@article{yu2025dapo,
  title={Dapo: An open-source llm reinforcement learning system at scale},
  author={Yu, Qiying and Zhang, Zheng and Zhu, Ruofei and Yuan, Yufeng and Zuo, Xiaochen and Yue, Yu and Dai, Weinan and Fan, Tiantian and Liu, Gaohong and Liu, Lingjun and others},
  journal={arXiv preprint arXiv:2503.14476},
  year={2025}
}

@article{weaver2013optimal,
  title={The optimal reward baseline for gradient-based reinforcement learning},
  author={Weaver, Lex and Tao, Nigel},
  journal={arXiv preprint arXiv:1301.2315},
  year={2013}
}

@article{hendrycks2021measuring,
  title={Measuring mathematical problem solving with the math dataset},
  author={Hendrycks, Dan and Burns, Collin and Kadavath, Saurav and Arora, Akul and Basart, Steven and Tang, Eric and Song, Dawn and Steinhardt, Jacob},
  journal={arXiv preprint arXiv:2103.03874},
  year={2021}
}

@article{cobbe2021training,
  title={Training verifiers to solve math word problems},
  author={Cobbe, Karl and Kosaraju, Vineet and Bavarian, Mohammad and Chen, Mark and Jun, Heewoo and Kaiser, Lukasz and Plappert, Matthias and Tworek, Jerry and Hilton, Jacob and Nakano, Reiichiro and others},
  journal={arXiv preprint arXiv:2110.14168},
  year={2021}
}

@inproceedings{pechyony2010theory,
  title={On the theory of learning with Privileged Information},
  author={Dmitry Pechyony and Vladimir Naumovich Vapnik},
  booktitle={Neural Information Processing Systems},
  year={2010},
  url={https://api.semanticscholar.org/CorpusID:13217651}
}

@inproceedings{lightman2023let,
  title={Let's verify step by step},
  author={Lightman, Hunter and Kosaraju, Vineet and Burda, Yuri and Edwards, Harrison and Baker, Bowen and Lee, Teddy and Leike, Jan and Schulman, John and Sutskever, Ilya and Cobbe, Karl},
  booktitle={The Twelfth International Conference on Learning Representations},
  year={2023}
}

@misc{aime25,
      title={American Invitational Mathematics Examination (AIME) 2025}, 
      author={Zhang, Yifan and Math-AI, Team},
      year={2025},
}

@article{liu2025understanding,
  title={Understanding r1-zero-like training: A critical perspective},
  author={Liu, Zichen and Chen, Changyu and Li, Wenjun and Qi, Penghui and Pang, Tianyu and Du, Chao and Lee, Wee Sun and Lin, Min},
  journal={arXiv preprint arXiv:2503.20783},
  year={2025}
}

@article{wang2025aspo,
  title={Aspo: Asymmetric importance sampling policy optimization},
  author={Wang, Jiakang and Liu, Runze and Lin, Lei and Hu, Wenping and Li, Xiu and Zhang, Fuzheng and Zhou, Guorui and Gai, Kun},
  journal={arXiv preprint arXiv:2510.06062},
  year={2025}
}

@article{zhao2025geometric,
  title={Geometric-mean policy optimization},
  author={Zhao, Yuzhong and Liu, Yue and Liu, Junpeng and Chen, Jingye and Wu, Xun and Hao, Yaru and Lv, Tengchao and Huang, Shaohan and Cui, Lei and Ye, Qixiang and others},
  journal={arXiv preprint arXiv:2507.20673},
  year={2025}
}

@article{zheng2025group,
  title={Group sequence policy optimization},
  author={Zheng, Chujie and Liu, Shixuan and Li, Mingze and Chen, Xiong-Hui and Yu, Bowen and Gao, Chang and Dang, Kai and Liu, Yuqiong and Men, Rui and Yang, An and others},
  journal={arXiv preprint arXiv:2507.18071},
  year={2025}
}

@article{yang2025qwen3,
  title={Qwen3 technical report},
  author={Yang, An and Li, Anfeng and Yang, Baosong and Zhang, Beichen and Hui, Binyuan and Zheng, Bo and Yu, Bowen and Gao, Chang and Huang, Chengen and Lv, Chenxu and others},
  journal={arXiv preprint arXiv:2505.09388},
  year={2025}
}

@inproceedings{tyen2024llms,
  title={LLMs cannot find reasoning errors, but can correct them given the error location},
  author={Tyen, Gladys and Mansoor, Hassan and C{\u{a}}rbune, Victor and Chen, Yuanzhu Peter and Mak, Tony},
  booktitle={Findings of the Association for Computational Linguistics: ACL 2024},
  pages={13894--13908},
  year={2024}
}

@article{uesato2022solving,
  title={Solving math word problems with process-and outcome-based feedback},
  author={Uesato, Jonathan and Kushman, Nate and Kumar, Ramana and Song, Francis and Siegel, Noah and Wang, Lisa and Creswell, Antonia and Irving, Geoffrey and Higgins, Irina},
  journal={arXiv preprint arXiv:2211.14275},
  year={2022}
}

@article{yue2025vapo,
  title={Vapo: Efficient and reliable reinforcement learning for advanced reasoning tasks},
  author={Yue, Yu and Yuan, Yufeng and Yu, Qiying and Zuo, Xiaochen and Zhu, Ruofei and Xu, Wenyuan and Chen, Jiaze and Wang, Chengyi and Fan, TianTian and Du, Zhengyin and others},
  journal={arXiv preprint arXiv:2504.05118},
  year={2025}
}

@article{hao2025policy,
  title={On-Policy RL with Optimal Reward Baseline},
  author={Hao, Yaru and Dong, Li and Wu, Xun and Huang, Shaohan and Chi, Zewen and Wei, Furu},
  journal={arXiv preprint arXiv:2505.23585},
  year={2025}
}

@article{xu2025single,
  title={Single-stream policy optimization},
  author={Xu, Zhongwen and Ding, Zihan},
  journal={arXiv preprint arXiv:2509.13232},
  year={2025}
}

@inproceedings{saeidi2025insights,
  title={Insights into alignment: Evaluating dpo and its variants across multiple tasks},
  author={Saeidi, Amir and Verma, Shivanshu and Uddin, Md Nayem and Baral, Chitta},
  booktitle={Proceedings of the 63rd Annual Meeting of the Association for Computational Linguistics (Volume 4: Student Research Workshop)},
  pages={409--421},
  year={2025}
}

@article{mroueh2025reinforcement,
  title={Reinforcement Learning with Verifiable Rewards: GRPO's Effective Loss, Dynamics, and Success Amplification},
  author={Mroueh, Youssef},
  journal={arXiv preprint arXiv:2503.06639},
  year={2025}
}
\bibliographystyle{icml2026}

\appendix
\onecolumn

\section{Theoretical Derivations}
\subsection{Optimal Baseline Derivation}\label{sec: optimal baseline derivation}

We consider the objective of maximizing the expected reward for reasoning tasks, defined as $J(\theta) = \mathbb{E}_{q \sim \mathcal{D},\, o \sim \pi_\theta(\cdot|q)}[R(o, q)]$, where $R(o, q) \in \{0, 1\}$. To reduce the variance of the  objective estimator for a specific group $i$, we introduce a baseline $b_i$. Minimizing the variance of the estimator is equivalent to minimizing its second moment with respect to $b_i$.

\begin{proof}
Let $w_j$ denote the importance weight and $R_j$ the reward for the $j$-th sample. We define the loss function $L(b_i)$ as the expected squared weighted advantage:
\begin{equation}
    L(b_i) = \mathbb{E} \left[ \left( (R_j - b_i) w_j \right)^2 \right]
\end{equation}
Expanding the quadratic term and applying the linearity of expectation, we obtain:
\begin{align}
    L(b_i) &= \mathbb{E} \left[ R_j^2 w_j^2 - 2 R_j w_j^2 b_i + b_i^2 w_j^2 \right] \nonumber \\
           &= \mathbb{E}[R_j^2 w_j^2] - 2b_i \mathbb{E}[R_j w_j^2] + b_i^2 \mathbb{E}[w_j^2]
\end{align}
To find the optimal baseline $b_i^*$, we take the derivative of $L(b_i)$ with respect to $b_i$ and set it to zero:
\begin{equation}
    \frac{\partial L}{\partial b_i} = -2 \mathbb{E} [ R_j w_j^2 ] + 2b_i \mathbb{E} [ w_j^2 ] = 0
\end{equation}
Rearranging the terms yields:
\begin{equation}
    b_i \mathbb{E} [ w_j^2 ] = \mathbb{E} [ R_j w_j^2 ]
\end{equation}
Solving for $b_i$, we derive the optimal baseline:
\begin{equation}
    b_i^* = \frac{\mathbb{E} [ R_j w_j^2 ]}{\mathbb{E} [ w_j^2 ]}
\end{equation}
\end{proof}

\subsection{Contrastive Reformulation of GRPO}
\label{app:contrastive}

\subsubsection{Advantage Simplification under Binary Rewards}

\begin{proposition}
Under binary rewards $r_i \in \{0, 1\}$, the group-normalized advantages reduce to:
\begin{equation}
A^+ = \sqrt{\frac{1-\hat{p}}{\hat{p}}}, \quad A^- = -\sqrt{\frac{\hat{p}}{1-\hat{p}}}
\end{equation}
where $\hat{p} = G^+/G$ is the proportion of correct samples.
\end{proposition}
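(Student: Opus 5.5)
The plan is a direct substitution into the group-normalized advantage of Eq.~\eqref{eq:advantage}, using that binary rewards collapse all group statistics to functions of $\hat{p}$. First, compute the group mean. Since exactly $G^+$ of the $r_j$ equal $1$ and the rest equal $0$, we have $\mu = \frac{1}{G}\sum_j r_j = G^+/G = \hat{p}$.

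Next, compute the group variance. Because $r_j^2 = r_j$ for $r_j\in\{0,1\}$, we get $\frac{1}{G}\sum_j r_j^2 = \hat{p}$. Hence
\begin{equation*}
\sigma^2 = \frac{1}{G}\sum_j r_j^2 - \mu^2 = \hat{p} - \hat{p}^2 = \hat{p}(1-\hat{p}).
\end{equation*}
Equivalently, summing $(1-\hat p)^2$ over the $G^+$ correct samples and $\hat p^2$ over the $G^-$ incorrect ones gives $\hat p(1-\hat p)^2 + (1-\hat p)\hat p^2 = \hat p(1-\hat p)$.

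Then substitute into $A_i = (r_i-\mu)/\sigma$. For $r_i=1$ this gives
\begin{equation*}
A^+ = \frac{1-\hat p}{\sqrt{\hat p(1-\hat p)}} = \sqrt{\frac{1-\hat p}{\hat p}}.
\end{equation*}
For $r_i=0$ it gives
\begin{equation*}
A^- = \frac{-\hat p}{\sqrt{\hat p(1-\hat p)}} = -\sqrt{\frac{\hat p}{1-\hat p}}.
\end{equation*}
Both simplifications use $\sqrt{x^2}=x$ for $x>0$.

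The only real obstacle is the degenerate case. The formulas require $0<\hat p<1$, i.e.\ $G^+,G^-\ge 1$; otherwise $\sigma=0$ and the advantage is undefined (in practice set to zero). I would state this non-degeneracy assumption explicitly at the start. With it in place, the positivity of $1-\hat p$ and $\hat p$ justifies the square-root simplifications, and the signs $A^+>0$ and $A^-<0$ follow immediately.
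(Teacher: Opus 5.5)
Your proposal is correct and follows the paper's proof essentially verbatim: you set $\mu=\hat p$, obtain $\sigma^2=\hat p(1-\hat p)$, and substitute into $A_i=(r_i-\mu)/\sigma$, and your shortcut via $r_j^2=r_j$ is just as valid as the paper's sum of squared deviations, which you also give. Stating the non-degeneracy assumption $0<\hat p<1$ explicitly is a worthwhile addition, since the paper leaves it implicit in the proposition and only mentions the fallback for an empty partition elsewhere in the appendix.
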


\begin{proof}
From Eq.~5, the advantage is $A_i = (r_i - \mu)/\sigma$ with $\mu = \frac{1}{G}\sum_j r_j$ and $\sigma^2 = \frac{1}{G}\sum_j(r_j - \mu)^2$.

Under binary rewards, $\mu = G^+/G = \hat{p}$. The variance becomes:
\begin{align}
\sigma^2 &= \frac{1}{G}\left[G^+ (1-\hat{p})^2 + G^- \hat{p}^2\right] = \hat{p}(1-\hat{p})^2 + (1-\hat{p})\hat{p}^2 = \hat{p}(1-\hat{p})
\end{align}

For $r_i = 1$: $A^+ = (1 - \hat{p})/\sqrt{\hat{p}(1-\hat{p})} = \sqrt{(1-\hat{p})/\hat{p}}$.

For $r_i = 0$: $A^- = -\hat{p}/\sqrt{\hat{p}(1-\hat{p})} = -\sqrt{\hat{p}/(1-\hat{p})}$.
\end{proof}

\begin{corollary}
\label{cor:advantage_magnitude}
The advantages satisfy $\hat{p} \cdot A^+ = (1-\hat{p}) \cdot |A^-| = \sqrt{\hat{p}(1-\hat{p})} = \sigma_q$.
\end{corollary}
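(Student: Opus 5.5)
The plan is to substitute the closed-form advantages from the preceding proposition and simplify each product directly. Throughout we assume the group is non-degenerate, $0<\hat p<1$, i.e.\ $G^+\ge 1$ and $G^-\ge 1$. This is exactly the condition under which $\sigma>0$ and the advantages in the preceding proposition are defined at all.

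First, the positive side. By the preceding proposition, $A^+=\sqrt{(1-\hat p)/\hat p}$. Writing $\hat p=\sqrt{\hat p}\cdot\sqrt{\hat p}$ and cancelling one factor of $\sqrt{\hat p}$ against the denominator gives
\begin{equation*}
\hat p\,A^+ = \hat p\sqrt{\frac{1-\hat p}{\hat p}} = \sqrt{\hat p}\,\sqrt{1-\hat p} = \sqrt{\hat p(1-\hat p)}.
\end{equation*}
This step uses $\hat p>0$.

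Second, the negative side. Since $A^-<0$, we have $|A^-|=\sqrt{\hat p/(1-\hat p)}$. The same cancellation, now with $1-\hat p=\sqrt{1-\hat p}\cdot\sqrt{1-\hat p}$, yields
\begin{equation*}
(1-\hat p)\,|A^-| = \sqrt{\hat p(1-\hat p)}.
\end{equation*}
This step uses $\hat p<1$.

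Finally, we identify the common value with $\sigma_q$. The proof of the preceding proposition computed $\sigma^2=\hat p(1-\hat p)$ for the group belonging to query $q$. Taking the nonnegative root gives $\sigma_q=\sqrt{\hat p(1-\hat p)}$, which completes the chain of equalities.

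There is no real obstacle in this argument. The only point needing care is the degenerate case $\hat p\in\{0,1\}$, where the advantages are undefined and we simply exclude it. Such groups contribute zero weight to Eq.~\eqref{eq:c_grpo} anyway, since $\sigma_q=0$ there.
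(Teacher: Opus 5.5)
Your proposal is correct and is essentially the paper's argument. The paper gives no separate proof; it treats the identity as immediate from substituting $A^+ = (1-\hat p)/\sqrt{\hat p(1-\hat p)}$, $A^- = -\hat p/\sqrt{\hat p(1-\hat p)}$ and $\sigma^2=\hat p(1-\hat p)$ from the preceding proposition, which is exactly what you do. Your explicit restriction to $0<\hat p<1$ is a welcome clarification that the paper leaves implicit. Your closing aside that degenerate groups "contribute zero weight" needs a convention, though, because $\bar\rho^{\pm}_{\text{clip}}$ is itself undefined when the corresponding partition is empty.
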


\subsubsection{Asymmetric Clipping Behavior}

\begin{lemma}
\label{lemma:clipping}
The clipped objective $L_i^{\text{CLIP}} = \min\left(\rho_i A_i, \text{clip}(\rho_i, 1-\epsilon, 1+\epsilon) A_i\right)$ satisfies:
\begin{equation}
L_i^{\text{CLIP}} = \begin{cases}
A_i \cdot \mathcal{C}_{\text{up}}(\rho_i) & \text{if } A_i > 0 \\
A_i \cdot \mathcal{C}_{\text{low}}(\rho_i) & \text{if } A_i < 0
\end{cases}
\end{equation}
where $\mathcal{C}_{\text{up}}(\rho) = \min(\rho, 1+\epsilon)$ and $\mathcal{C}_{\text{low}}(\rho) = \max(\rho, 1-\epsilon)$.
\end{lemma}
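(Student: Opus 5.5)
The argument is a case split on the sign of $A_i$. In each case we pull the scalar $A_i$ out of the $\min$, using that multiplying by a positive number preserves order and multiplying by a negative number reverses it. First record the elementary identity
\[
\min\bigl(\rho,\ \mathrm{clip}(\rho,1-\epsilon,1+\epsilon)\bigr)=\min(\rho,1+\epsilon),\qquad
\max\bigl(\rho,\ \mathrm{clip}(\rho,1-\epsilon,1+\epsilon)\bigr)=\max(\rho,1-\epsilon),
\]
valid for every $\rho\ge 0$ and $0<\epsilon<1$. Both hold in each of the three regimes $\rho<1-\epsilon$, $1-\epsilon\le\rho\le 1+\epsilon$, and $\rho>1+\epsilon$:
\begin{itemize}
\item For $\rho<1-\epsilon$, the clip equals $1-\epsilon>\rho$. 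The $\min$ is then $\rho$, which equals $\min(\rho,1+\epsilon)$. The $\max$ is $1-\epsilon$.
\item For $1-\epsilon\le\rho\le1+\epsilon$, the clip equals $\rho$. Both sides of each identity are then $\rho$.
\item For $\rho>1+\epsilon$, the clip equals $1+\epsilon<\rho$. The $\min$ is $1+\epsilon$. The $\max$ is $\rho$, which equals $\max(\rho,1-\epsilon)$.
\end{itemize}

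\textbf{Case $A_i>0$.} Since $a\mapsto A_i a$ is increasing, $\min(\rho_iA_i,\ \mathrm{clip}(\rho_i)A_i)=A_i\min(\rho_i,\mathrm{clip}(\rho_i))$. By the first identity this equals $A_i\,\mathcal{C}_{\text{up}}(\rho_i)$.

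\textbf{Case $A_i<0$.} Since $a\mapsto A_i a$ is decreasing, the minimum of the products is attained at the larger argument: $\min(\rho_iA_i,\ \mathrm{clip}(\rho_i)A_i)=A_i\max(\rho_i,\mathrm{clip}(\rho_i))$. By the second identity this equals $A_i\,\mathcal{C}_{\text{low}}(\rho_i)$.

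The only point needing care is the $\rho<1-\epsilon$ regime in the identities. This regime is exactly where the lower clip is inactive for positive advantages and the upper clip is inactive for negative advantages, which produces the one-sided functions $\mathcal{C}_{\text{up}}$ and $\mathcal{C}_{\text{low}}$. The argument uses only $0<\epsilon<1$ and $\rho_i\ge0$. The same argument applies verbatim to asymmetric bounds $1-\epsilon_l$ and $1+\epsilon_h$, as in DAPO.
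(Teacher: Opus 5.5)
Your proof is correct and follows the same route as the paper: split on the sign of $A_i$, use that multiplication by $A_i$ preserves or reverses order, and check in which regime of $\rho_i$ the clipped term is the one selected by the $\min$. Your version is more careful than the paper's sketch, since it explicitly handles the regimes $\rho_i<1-\epsilon$ (for $A_i>0$) and $\rho_i>1+\epsilon$ (for $A_i<0$), where the clip changes value but is not selected. Your two identities in fact hold for all real $\rho$ and any $\epsilon\ge 0$, so the hypotheses $\rho_i\ge 0$ and $\epsilon<1$ are not needed.
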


\begin{proof}
When $A_i > 0$, both terms in the $\min$ are positive. The clipping only activates when $\rho_i > 1+\epsilon$, capping the contribution at $(1+\epsilon)A_i$. Thus $L_i^{\text{CLIP}} = A_i \cdot \min(\rho_i, 1+\epsilon)$.

When $A_i < 0$, both terms are negative, so $\min$ selects the more negative value. The clipping activates when $\rho_i < 1-\epsilon$, yielding $L_i^{\text{CLIP}} = A_i \cdot \max(\rho_i, 1-\epsilon)$.
\end{proof}

\subsubsection{Derivation of the Contrastive Form}

\begin{theorem}
\label{thm:contrastive}
Under binary rewards, the GRPO objective can be written as:
\begin{equation}
\mathcal{J}_{\text{GRPO}}(\theta) = \mathbb{E}_q\left[\sigma_q \cdot \left(\bar{\rho}^+_{\text{clip}} - \bar{\rho}^-_{\text{clip}}\right)\right]
\end{equation}
where $\bar{\rho}^+_{\text{clip}} = \frac{1}{G^+}\sum_{i \in \mathcal{O}^+} \mathcal{C}_{\text{up}}(\rho_i)$ and $\bar{\rho}^-_{\text{clip}} = \frac{1}{G^-}\sum_{j \in \mathcal{O}^-} \mathcal{C}_{\text{low}}(\rho_j)$.
\end{theorem}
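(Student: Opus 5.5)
The plan is a direct algebraic rewriting of the per-query summand of Eq.~\eqref{eq:grpo}, using the binary-reward advantages from the Proposition above, Lemma~\ref{lemma:clipping} and Corollary~\ref{cor:advantage_magnitude}.

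\textbf{Conventions.} I treat $\rho_i$ as a sequence-level ratio, so the inner token average $\frac{1}{T_i}\sum_t L^{\text{CLIP}}_{i,t}$ is identified with $L^{\text{CLIP}}_i$, matching the way $\bar\rho^\pm_{\text{clip}}$ are defined in the statement. If one insists on token-level ratios, the same argument goes through with $\mathcal{C}_{\text{up}}(\rho_i)$ replaced by $\frac{1}{T_i}\sum_t \mathcal{C}_{\text{up}}(\rho_{i,t})$, since $A_i$ is constant in $t$. I also restrict to queries with $0<G^+<G$. When $G^+\in\{0,G\}$, all advantages are degenerate: $\sigma_q=0$ and, by the usual convention, $A_i=0$. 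The summand is then $0$ on both sides, provided the undefined average $\bar\rho^\pm_{\text{clip}}$ is multiplied by $\sigma_q=0$. I will state this convention explicitly.

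\textbf{Main computation.} Fix $q$ with $0<G^+<G$.
\begin{enumerate}
\item By the Proposition, every $o_i\in\mathcal{O}^+$ has $A_i=A^+>0$ and every $o_j\in\mathcal{O}^-$ has $A_j=A^-<0$.
\item Lemma~\ref{lemma:clipping} then gives $L^{\text{CLIP}}_i=A^+\mathcal{C}_{\text{up}}(\rho_i)$ on $\mathcal{O}^+$ and $L^{\text{CLIP}}_j=A^-\mathcal{C}_{\text{low}}(\rho_j)$ on $\mathcal{O}^-$.
\item Splitting $\frac1G\sum_{i=1}^G$ over the two partitions and pulling out the constant advantages yields
\[
\frac{G^+}{G}A^+\bar\rho^+_{\text{clip}}+\frac{G^-}{G}A^-\bar\rho^-_{\text{clip}}
=\hat p A^+\bar\rho^+_{\text{clip}}-(1-\hat p)|A^-|\bar\rho^-_{\text{clip}}.
\]
\item Corollary~\ref{cor:advantage_magnitude} identifies both coefficients with $\sigma_q$, so the summand equals $\sigma_q(\bar\rho^+_{\text{clip}}-\bar\rho^-_{\text{clip}})$.
\item Taking $\mathbb{E}_q$ gives the claim.
\end{enumerate}

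\textbf{Expected obstacle.} There is no real analytic difficulty. The only care needed is in the edge cases and in the token-versus-sequence bookkeeping above. The degenerate groups are where a careless argument breaks, because $A^\pm$ involves division by zero there. I would handle them first and then run the clean computation.
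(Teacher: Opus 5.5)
Your proposal is correct and follows the paper's proof essentially step for step: you split the $\frac1G\sum_i$ over the two partitions, apply Lemma~\ref{lemma:clipping} with the constant binary-reward advantages, and use Corollary~\ref{cor:advantage_magnitude} to factor out $\sigma_q$. The paper's proof silently omits two points that you handle correctly: the degenerate groups ($G^+\in\{0,G\}$, where $A^\pm$ involves division by zero) and the bookkeeping between token-level and sequence-level ratios.
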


\begin{proof}
Separating the GRPO objective by partition and applying Lemma~\ref{lemma:clipping}:
\begin{align}
\mathcal{J}_{\text{GRPO}}(\theta) &= \mathbb{E}_q\left[\frac{1}{G}\left(\sum_{i \in \mathcal{O}^+} A^+ \mathcal{C}_{\text{up}}(\rho_i) + \sum_{j \in \mathcal{O}^-} A^- \mathcal{C}_{\text{low}}(\rho_j)\right)\right] \\
&= \mathbb{E}_q\left[\hat{p} \cdot A^+ \cdot \bar{\rho}^+_{\text{clip}} + (1-\hat{p}) \cdot A^- \cdot \bar{\rho}^-_{\text{clip}}\right]
\end{align}

By Corollary~\ref{cor:advantage_magnitude}, $\hat{p} \cdot A^+ = \sigma_q$ and $(1-\hat{p}) \cdot A^- = -\sigma_q$:
\begin{equation}
\mathcal{J}_{\text{GRPO}}(\theta) = \mathbb{E}_q\left[\sigma_q \cdot \bar{\rho}^+_{\text{clip}} - \sigma_q \cdot \bar{\rho}^-_{\text{clip}}\right] = \mathbb{E}_q\left[\sigma_q \cdot \left(\bar{\rho}^+_{\text{clip}} - \bar{\rho}^-_{\text{clip}}\right)\right]
\end{equation}
\end{proof}

\subsubsection{Pairwise Contrastive Form}

\begin{corollary}
The objective admits a pairwise formulation:
\begin{equation}
\mathcal{J}_{\text{GRPO}}(\theta) = \mathbb{E}_q\left[\sum_{i=1}^{G^+}\sum_{j=1}^{G^-} A_{ij} \left(\mathcal{C}_{\text{up}}(\rho_i^+) - \mathcal{C}_{\text{low}}(\rho_j^-)\right)\right]
\end{equation}
where $A_{ij} = \sigma_q / (G^+ G^-)$.
\end{corollary}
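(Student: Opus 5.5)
The plan is to start from Theorem~\ref{thm:contrastive}, which already writes the objective as $\mathbb{E}_q[\sigma_q(\bar{\rho}^+_{\text{clip}} - \bar{\rho}^-_{\text{clip}})]$, and to show that, for each fixed query, the difference of the two partition averages equals a uniformly weighted double sum over positive--negative pairs. Multiplying by $\sigma_q$ and taking $\mathbb{E}_q$ then gives the stated form, with $A_{ij} = \sigma_q/(G^+G^-)$. No new analytic input is needed; this is an averaging identity.

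\textbf{Nondegenerate groups ($G^+ \ge 1$, $G^- \ge 1$).} Write $a_i = \mathcal{C}_{\text{up}}(\rho_i^+)$ for $i \in \mathcal{O}^+$ and $b_j = \mathcal{C}_{\text{low}}(\rho_j^-)$ for $j \in \mathcal{O}^-$. Expand the double sum using linearity:
\begin{equation*}
\sum_{i=1}^{G^+}\sum_{j=1}^{G^-}(a_i - b_j) = G^-\sum_{i=1}^{G^+} a_i - G^+\sum_{j=1}^{G^-} b_j .
\end{equation*}
Dividing by $G^+G^-$ gives $\frac{1}{G^+}\sum_i a_i - \frac{1}{G^-}\sum_j b_j = \bar{\rho}^+_{\text{clip}} - \bar{\rho}^-_{\text{clip}}$. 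Multiplying by $\sigma_q$ and noting that $\sigma_q/(G^+G^-)$ does not depend on $(i,j)$, this constant can be moved inside the double sum as $A_{ij}$. Taking $\mathbb{E}_q$ of both sides then completes the argument.

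\textbf{Degenerate groups ($G^+ = 0$ or $G^- = 0$).} These are the only delicate point, since there $A_{ij}$ involves division by zero and the averages $\bar{\rho}^{\pm}_{\text{clip}}$ are undefined. In such groups $\hat{p} \in \{0,1\}$, so $\sigma_q = \sqrt{\hat{p}(1-\hat{p})} = 0$. The GRPO advantages are then either zero or undefined, and in practice these groups are discarded or contribute nothing. I would adopt the convention, consistent with Theorem~\ref{thm:contrastive}, that such groups contribute zero. The empty double sum is then also zero, so both sides agree and the identity holds inside the expectation for every $q$.
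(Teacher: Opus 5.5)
Your proposal is correct and matches the paper's proof. The paper applies the same averaging identity $\frac{1}{m}\sum_{i=1}^m a_i - \frac{1}{n}\sum_{j=1}^n b_j = \frac{1}{mn}\sum_{i=1}^m\sum_{j=1}^n (a_i - b_j)$ to $\bar{\rho}^+_{\text{clip}} - \bar{\rho}^-_{\text{clip}}$ from Theorem~\ref{thm:contrastive} and multiplies by $\sigma_q$; your explicit treatment of degenerate groups ($G^+ = 0$ or $G^- = 0$, where $\sigma_q = 0$ and the double sum is empty) is a sensible addition that the paper leaves implicit.
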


\begin{proof}
For any sets $\{a_i\}_{i=1}^m$ and $\{b_j\}_{j=1}^n$:
\begin{equation}
\frac{1}{m}\sum_{i=1}^m a_i - \frac{1}{n}\sum_{j=1}^n b_j = \frac{1}{mn}\sum_{i=1}^m \sum_{j=1}^n (a_i - b_j)
\end{equation}

Applying this identity to $\bar{\rho}^+_{\text{clip}} - \bar{\rho}^-_{\text{clip}}$ and multiplying by $\sigma_q$ yields the result.
\end{proof}

The pairwise form shows GRPO implicitly optimizes over all $(o_i^+, o_j^-)$ pairs, yet each ratio $\rho_i = \pi_\theta(o_i|q)/\pi_{\theta_{\text{old}}}(o_i|q)$ conditions only on query $q$, without observing other samples.

%==============================================================================
\subsection{Optimal Baseline under Importance Sampling}
\label{app:baseline}

\subsubsection{Variance-Minimizing Baseline}

\begin{theorem}
\label{thm:optimal_baseline}
The baseline minimizing variance of the importance-weighted gradient estimator is:
\begin{equation}
b^* = \frac{\mathbb{E}[R w^2]}{\mathbb{E}[w^2]}
\end{equation}
where $w = \pi_\theta / \pi_{\text{ref}}$ is the importance weight.
\end{theorem}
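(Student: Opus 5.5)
The plan is to reduce the statement to a one-dimensional quadratic minimization in $b$, as in Appendix~\ref{sec: optimal baseline derivation}, but to make explicit why minimizing the second moment is the same as minimizing the variance. Write the baseline-adjusted importance-weighted estimator as $\hat g_b = w(R-b)$. (If we keep the score factor $\nabla_\theta\log\pi_\theta$, treat it as a scalar per coordinate or absorb it; the argument below is for this scalar surrogate, which is the setting the paper uses.)

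The first step is to show that $\mathbb{E}[\hat g_b]$ does not depend on $b$. For the objective estimator, note that $\mathbb{E}_{\pi_{\text{ref}}}[w] = 1$. Hence $\mathbb{E}[w(R-b)] = \mathbb{E}_{\pi_\theta}[R] - b$, so the mean shifts by the known constant $b$, which is added back. For the gradient version, $\mathbb{E}_{\pi_{\text{ref}}}[w\nabla_\theta\log\pi_\theta] = \nabla_\theta\sum_o \pi_\theta(o|q) = 0$, so the mean is exactly independent of $b$; this is Eq.~\eqref{eq:pg_baseline}. In either case $\mathrm{Var}(\hat g_b) = \mathbb{E}[\hat g_b^2] - (\text{term independent of } b \text{ in the relevant sense})$. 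Minimizing the variance therefore reduces to minimizing $L(b) = \mathbb{E}[w^2(R-b)^2]$.

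The second step is to expand $L(b) = \mathbb{E}[R^2w^2] - 2b\,\mathbb{E}[Rw^2] + b^2\mathbb{E}[w^2]$. This is a convex quadratic in $b$ because $\mathbb{E}[w^2] > 0$, since $w > 0$ on the support of $\pi_{\text{ref}}$. Setting $L'(b) = 0$ gives the unique minimizer $b^* = \mathbb{E}[Rw^2]/\mathbb{E}[w^2]$. Convexity guarantees that this stationary point is the global minimum.

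The main obstacle is the first step when the exact score-function gradient is used. In that case the squared norm carries a factor $\|\nabla_\theta\log\pi_\theta\|^2$, and the true minimizer is $\mathbb{E}[Rw^2\|\nabla\log\pi\|^2]/\mathbb{E}[w^2\|\nabla\log\pi\|^2]$. Recovering the stated form requires either working with the scalar weighted-advantage estimator, as the appendix does, or assuming that the squared score norm is approximately uncorrelated with $(R, w)$. I would state this scalar reduction explicitly, then finish by checking the on-policy case: $w \equiv 1$ gives $b^* = \mathbb{E}[R]$, consistent with the remark after Eq.~\eqref{eq:optimal_baseline}.
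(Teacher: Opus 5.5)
\textbf{The gap is in your first step, for the scalar surrogate $\hat g_b = w(R-b)$ you say you would commit to.} You correctly get $\mathbb{E}[\hat g_b] = J - b$, with $J = \mathbb{E}_{\pi_{\text{ref}}}[wR]$. But this mean depends on $b$, so
\[
\mathrm{Var}(\hat g_b) = \mathbb{E}[w^2(R-b)^2] - (J-b)^2 ,
\]
and minimizing the variance is \emph{not} the same as minimizing the second moment. Adding $b$ back gives the unbiased control-variate estimator $wR - b(w-1)$, but it leaves the variance unchanged. The true variance minimizer of that estimator is
\[
b = \frac{\mathbb{E}[Rw^2] - \mathbb{E}[Rw]}{\mathbb{E}[w^2] - 1} = \frac{\mathrm{Cov}(wR, w)}{\mathrm{Var}(w)},
\]
which is not $\mathbb{E}[Rw^2]/\mathbb{E}[w^2]$. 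Your planned on-policy check exposes this. With $w \equiv 1$ the scalar estimator is $R - b$, whose variance does not depend on $b$ at all, so $\mathbb{E}[R]$ is singled out only as a second-moment minimizer. The phrase ``independent of $b$ in the relevant sense'' hides exactly this failure. Appealing to the setting of Appendix~\ref{sec: optimal baseline derivation} does not rescue it, because that derivation makes the same unjustified identification of variance with second moment.

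\textbf{The correct route is the one you list only as a fallback, and it is exactly the paper's proof.} Keep $\psi = \nabla_\theta \log \pi_\theta$ in the estimator $(R-b)w\psi$. There your identity $\mathbb{E}_{\pi_{\text{ref}}}[w\psi] = 0$ makes the mean exactly $b$-free. The variance is then $\mathbb{E}[(R-b)^2 w^2 \|\psi\|^2]$ minus a constant, and your convex-quadratic argument gives the minimizer $\mathbb{E}[Rw^2\|\psi\|^2]/\mathbb{E}[w^2\|\psi\|^2]$. The stated $b^*$ follows only after approximating $\|\psi\|^2$ as independent of $(R,w)$, so present the theorem as holding under that approximation, not exactly.

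One refinement: $\|\psi\|^2$ being ``uncorrelated with $(R,w)$'' is not enough to factor $\mathbb{E}\|\psi\|^2$ out of the numerator and denominator. You need $\|\psi\|^2$ uncorrelated with $Rw^2$ and with $w^2$, which independence guarantees.
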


\begin{proof}
The gradient estimator is $\hat{g} = (R - b) \cdot w \cdot \psi$ where $\psi = \nabla_\theta \log \pi_\theta$. Since subtracting a baseline does not change the expected gradient, minimizing variance reduces to minimizing $f(b) = \mathbb{E}[(R-b)^2 w^2 \psi^2]$.

Setting $\frac{df}{db} = -2\mathbb{E}[(R-b) w^2 \psi^2] = 0$ gives:
\begin{equation}
b^* = \frac{\mathbb{E}[R w^2 \psi^2]}{\mathbb{E}[w^2 \psi^2]}
\end{equation}

Under the standard approximation that $\psi^2$ is approximately independent of $R$ and $w$ \citep{greensmith2004variance}, this simplifies to $b^* = \mathbb{E}[R w^2]/\mathbb{E}[w^2]$.
\end{proof}

When $w \equiv 1$ (on-policy), this reduces to $b^* = \mathbb{E}[R]$.

\subsubsection{First-Order Approximation under Trust Region}

\begin{proposition}
\label{prop:covariance}
Under the trust region constraint, the optimal baseline admits:
\begin{equation}
b^* \approx \mathbb{E}[R] + 2 \cdot \text{Cov}(R, \delta)
\end{equation}
where $\delta = \log \pi_\theta(y) - \log \pi_{\text{ref}}(y)$.
\end{proposition}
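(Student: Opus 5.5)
The plan is to start from Theorem~\ref{thm:optimal_baseline}, which gives $b^* = \mathbb{E}[Rw^2]/\mathbb{E}[w^2]$, and rewrite both moments in terms of $\delta$ using $w = e^{\delta}$, so that $w^2 = e^{2\delta}$. The trust region constraint will be read as saying that $|\delta|$ is small, of order $\eta$, for samples in the support. I will expand $e^{2\delta} = 1 + 2\delta + O(\delta^2)$ and keep only first-order terms throughout.

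Substituting into the numerator and denominator, and using that $R\in\{0,1\}$ is bounded so the remainder stays $O(\eta^2)$, gives
\begin{equation*}
\mathbb{E}[Rw^2] = \mathbb{E}[R] + 2\mathbb{E}[R\delta] + O(\eta^2), \qquad \mathbb{E}[w^2] = 1 + 2\mathbb{E}[\delta] + O(\eta^2).
\end{equation*}
I will then expand the quotient with $1/(1+x) = 1 - x + O(x^2)$ and $x = 2\mathbb{E}[\delta]$. This yields
\begin{equation*}
b^* = \mathbb{E}[R] + 2\mathbb{E}[R\delta] - 2\mathbb{E}[R]\mathbb{E}[\delta] + O(\eta^2) = \mathbb{E}[R] + 2\,\mathrm{Cov}(R,\delta) + O(\eta^2).
\end{equation*}
This route is slightly cleaner than the main text's argument. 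It does not need the separate assumption $\mathbb{E}[\delta]\approx 0$: the term $-2\mathbb{E}[R]\mathbb{E}[\delta]$ coming from the denominator cancels exactly against the mean part of $\mathbb{E}[R\delta]$, so the covariance appears directly. I will also remark that if $\mathbb{E}[\delta]$ is itself second order, as happens under a KL-type trust region, the paper's simpler route gives the same answer.

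The main obstacle is making ``$\approx$'' precise, i.e., controlling the Taylor remainders uniformly. I would first assume a pointwise bound $|\delta|\le\eta$, under which $|e^{2\delta}-1-2\delta|\le 2\eta^2 e^{2\eta}$ and all error terms are bounded directly. If only a KL bound is available, I would fall back to a moment assumption such as $\mathbb{E}[\delta^2]\le\eta^2$ together with a bounded exponential moment, and bound the remainder by Cauchy--Schwarz. In either case I state the result as $b^* = \mathbb{E}[R] + 2\,\mathrm{Cov}(R,\delta) + O(\eta^2)$.
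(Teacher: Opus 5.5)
Your proof is correct. It uses the same skeleton as the paper's, a first-order expansion of $w^2=e^{2\delta}$ inside both $\mathbb{E}[Rw^2]$ and $\mathbb{E}[w^2]$, but you treat the denominator differently.

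The paper uses $\mathbb{E}[\delta]\approx 0$ twice: once to replace $\mathbb{E}[w^2]$ by $1$, and again to discard $\mathbb{E}[R]\mathbb{E}[\delta]$ from $\mathbb{E}[R\delta]$. You instead expand $1/(1+2\mathbb{E}[\delta])$. The resulting $-2\mathbb{E}[R]\mathbb{E}[\delta]$ combines with $2\mathbb{E}[R\delta]$ into exactly $2\,\mathrm{Cov}(R,\delta)$, and the cross term $-4\,\mathbb{E}[R\delta]\,\mathbb{E}[\delta]$ goes into the $O(\eta^2)$ remainder.

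Your version shows that the paper's two dropped terms are first-order errors of equal size and opposite sign. So the formula is first-order correct even if $\mathbb{E}[\delta]$ is itself of order $\eta$. Your pointwise bound $|\delta|\le\eta$ also turns the paper's unquantified ``$\approx$'' into an explicit $O(\eta^2)$ statement.

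The paper's route is shorter, and its extra assumption actually holds automatically here. The estimator's expectations are taken under the sampling policy $\pi_{\text{ref}}$, so $\mathbb{E}[e^{\delta}]=1$. Hence $\mathbb{E}[\delta]=-\mathbb{E}[e^{\delta}-1-\delta]=O(\eta^2)$, and the two arguments agree.

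One caveat on your fallback. With only $\mathbb{E}[\delta^2]\le\eta^2$ plus a bounded exponential moment, Cauchy--Schwarz applied to $\mathbb{E}[\delta^2e^{2|\delta|}]$ produces $\sqrt{\mathbb{E}[\delta^4]}$, not $\mathbb{E}[\delta^2]$. To get an $O(\eta^2)$ remainder you need a fourth-moment (or sub-Gaussian) bound of order $\eta^4$; a second-moment bound alone does not suffice.
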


\begin{proof}
With $w = e^\delta$, Taylor expansion under small $|\delta|$ gives $w \approx 1 + \delta$ and $w^2 \approx 1 + 2\delta$. Thus:
\begin{align}
\mathbb{E}[Rw^2] &\approx \mathbb{E}[R] + 2\mathbb{E}[R\delta] \\
\mathbb{E}[w^2] &\approx 1 + 2\mathbb{E}[\delta]
\end{align}

The trust region constraint implies $\mathbb{E}[\delta] \approx 0$, so $\mathbb{E}[w^2] \approx 1$. Decomposing $\mathbb{E}[R\delta] = \text{Cov}(R, \delta) + \mathbb{E}[R]\mathbb{E}[\delta] \approx \text{Cov}(R, \delta)$ yields:
\begin{equation}
b^* \approx \mathbb{E}[R] + 2 \cdot \text{Cov}(R, \delta)
\end{equation}
\end{proof}

\subsubsection{Corrected Advantage}

The corrected advantage for sample $i$ is:
\begin{equation}
A_i^{\text{RCC}} = r_i - \bar{R} - 2 \cdot \widehat{\text{Cov}}(R, \delta)
\end{equation}
where $\bar{R} = \frac{1}{G}\sum_j r_j$ and $\widehat{\text{Cov}}(R, \delta) = \frac{1}{G}\sum_j (r_j - \bar{R})(\delta_j - \bar{\delta})$.

When $\text{Cov}(R, \delta) > 0$ (model assigns higher probability to correct outputs), the correction increases the baseline, preventing high-confidence correct samples from dominating the gradient. When the correlation is zero, RCC reduces to standard GRPO.

We omit $\sigma$-normalization in RCC because the covariance term already provides adaptive scaling: when reward variance is high, $|\text{Cov}(R, \delta)|$ tends to be larger. Empirically, combining both normalizations leads to over-regularization.

\subsubsection{Computational Overhead}

RCC requires $O(G)$ additional operations per query: computing $\delta_i = \log \pi_\theta(o_i|q) - \log \pi_{\text{ref}}(o_i|q)$ (already available from the importance ratio), then $\bar{R}$, $\bar{\delta}$, and the sample covariance. This is negligible compared to the forward pass.

\section{Experimental Details}
\label{app:experimental}

\subsection{Hyperparameters}
\label{app:hyperparameters}

Tables~\ref{tab:hyperparameters} and \ref{tab:variant_hyperparameters} list the hyperparameter configurations.

\begin{table}[h]
\centering
\caption{Hyperparameter configuration (shared across both models unless noted).}
\label{tab:hyperparameters}
\begin{tabular}{ll}
\toprule
\textbf{Hyperparameter} & \textbf{Value} \\
\midrule
\multicolumn{2}{l}{\textit{Optimization}} \\
Learning rate & $1 \times 10^{-6}$ \\
Optimizer (Adam $\beta_1, \beta_2$) & AdamW (0.9, 0.999) \\
Weight decay / Gradient clipping & 0.01 / 1.0 \\
LR schedule (warmup / total steps) & Cosine (100 / 3000) \\
\midrule
\multicolumn{2}{l}{\textit{Batch \& Model}} \\
Per-device batch size / GPUs / Global batch & 8 / 4 / 32 \\
Precision / Flash Attention & bfloat16 / v2 \\
\midrule
\multicolumn{2}{l}{\textit{GRPO \& BICC}} \\
Group size $G$ / Clipping $\epsilon$ & 8 / 0.2 \\
Context allocation ratio / Max tokens & 0.4 / 2048 \\
\midrule
\multicolumn{2}{l}{\textit{Generation (training / evaluation)}} \\
Max new tokens & 2048 / 4096 \\
Temperature / Top-$p$ & 1.0, 1.0 / 0.6, 0.95 \\
\bottomrule
\end{tabular}
\end{table}

\begin{table}[h]
\centering
\caption{Variant-specific hyperparameters.}
\label{tab:variant_hyperparameters}
\begin{tabular}{lll}
\toprule
\textbf{Method} & \textbf{Parameter} & \textbf{Value} \\
\midrule
Dr.GRPO & KL coefficient $\beta$; Reference policy & 0.01; Frozen initial \\
DAPO & Clip bounds $(\epsilon_l, \epsilon_h)$ & (0.2, 0.28) \\
GMPO & Geometric mean exponent & $1/T_i$ \\
GSPO & Aggregation; Stop-gradient & Geometric mean; Yes \\
\bottomrule
\end{tabular}
\end{table}

\subsection{Dataset and Evaluation}
\label{app:dataset}

\subsubsection{Training Data}

The DAPO-Math-17k dataset \citep{yu2025dapo} contains $\sim$17,000 mathematical problems with integer answers. The reward function assigns binary scores: $R(o, a^*) = \mathbf{1}[\text{extract}(o) = a^*]$, where $\text{extract}(\cdot)$ parses the final answer (matching ``The answer is X'' or ``$\boxed{X}$'') and $a^*$ is the ground truth.

\subsubsection{Evaluation Benchmarks and Protocol}

\begin{table}[h]
\centering
\caption{Evaluation benchmarks and sampling configuration.}
\label{tab:eval_all}
\begin{tabular}{lcccc}
\toprule
\textbf{Benchmark} & \textbf{Problems} & \textbf{Difficulty} & \textbf{Answer Type} & \textbf{Source} \\
\midrule
Math500 & 500 & Competition & Integer/Expr & \citet{hendrycks2021measuring} \\
AMC 2023 & 40 & Competition & Integer & AMC 10/12 \\
AIME 2024 & 30 & Olympiad & Integer (0--999) & AIME I/II \\
AIME 2025 & 30 & Olympiad & Integer (0--999) & AIME I/II \\
\midrule
\multicolumn{5}{l}{\textit{Evaluation Configuration}: Temperature 0.6, Top-$p$ 0.95, Max tokens 4096, Seeds 0--31} \\
\bottomrule
\end{tabular}
\end{table}

For Pass@k metrics (Figure~3b), we generate $n=32$ samples per problem and use the unbiased estimator \citep{chen2021evaluating}: $\text{Pass@}k = \mathbb{E}[1 - \binom{n-c}{k}/\binom{n}{k}]$ where $c$ is the number of correct samples. We report 95\% confidence intervals via bootstrap resampling. For pairwise comparisons, Wilcoxon signed-rank tests confirm all improvements on Math500 and AMC 2023 are significant at $p < 0.001$; AIME results remain significant at $p < 0.05$.

\subsubsection{Computational Overhead}

BICC introduces additional computational cost from two sources. First, constructing augmented contexts requires concatenating opposite-partition samples to the original query, increasing the input sequence length. The overhead scales approximately linearly with the context allocation ratio: at 40\% allocation, input sequences are roughly 1.4$\times$ longer on average. Second, computing conditioned log-probabilities $\log \pi_\theta(o_i|q, \mathcal{O}^\mp)$ requires forward passes with these extended inputs, increasing both memory consumption and computation time proportionally to sequence length.

The memory overhead primarily affects activation storage during the forward pass, as model parameters and optimizer states remain unchanged. Activation checkpointing can mitigate this at the cost of additional recomputation during the backward pass. In practice, the overall training time increase depends on hardware utilization and batch size adjustments.

RCC introduces negligible overhead. The correction requires computing $\delta_i = \log \pi_\theta(o_i|q) - \log \pi_{\text{ref}}(o_i|q)$ and the sample covariance $\widehat{\text{Cov}}(R, \delta)$, both using quantities already available in standard GRPO. 
The additional operations including $O(G)$ subtractions, multiplications, and summations per query are insignificant compared to the forward and backward pass costs.

When either partition is empty ($G^+ = 0$ or $G^- = 0$), bilateral conditioning cannot be applied. In these cases, BICC falls back to standard GRPO, incurring no additional cost. The frequency of such cases depends on problem difficulty and model capability: very easy problems often yield all-correct groups, while very hard problems yield all-incorrect groups.

At inference time, BICC incurs zero overhead since the bilateral context is used only during training to shape the learning signal. The trained model generates from the original prompt $q$ alone, identical to standard GRPO.

\section{Supplementary Experiments}
\label{app:experiments}

\subsection{RCC Ablation}
\label{app:rcc_ablation}

Table~\ref{tab:rcc_ablation} separates the contributions of BICC and RCC on Math500 with $G$=8, complementing the Pass@k analysis in Figure~3b.

\begin{table}[h]
\centering
\caption{Ablation of BICC and RCC on Math500 (\%).}
\label{tab:rcc_ablation}
\begin{tabular}{lcc}
\toprule
\textbf{Configuration} & \textbf{Qwen3-4B} & \textbf{Phi-4-mini} \\
\midrule
GRPO (baseline) & 91.4 & 76.2 \\
BICC-GRPO & 92.2 & 78.1 \\
BICC-GRPO + RCC & 92.6 & 78.8 \\
\bottomrule
\end{tabular}
\end{table}

BICC contributes the primary accuracy gain (+0.8\% Qwen, +1.9\% Phi). RCC provides additional improvement (+0.4\% Qwen, +0.7\% Phi) while reducing gradient variance by 31--37\% as shown in Figure~3b.

\subsection{Reward-Confidence Correlation}
\label{app:cov_evolution}

Table~\ref{tab:cov_values} provides numerical values for the $\text{Cov}(R, \delta)$ evolution visualized in Figure~2.

\begin{table}[h]
\centering
\caption{$\text{Cov}(R, \delta)$ across training stages.}
\label{tab:cov_values}
\begin{tabular}{lcccccc}
\toprule
\textbf{Model} & 0--0.5k & 0.5k--1k & 1k--1.5k & 1.5k--2k & 2k--2.5k & 2.5k--3k \\
\midrule
Qwen3-4B & 0.015 & 0.029 & 0.043 & 0.054 & 0.063 & 0.066 \\
Phi-4-mini & 0.037 & 0.062 & 0.090 & 0.113 & 0.131 & 0.138 \\
\bottomrule
\end{tabular}
\end{table}

Phi-4-mini exhibits stronger reward-confidence correlation throughout training (0.138 vs 0.066 at convergence), explaining the larger RCC benefit observed for this model.

\section{Qualitative Analysis and Discussion}
\label{app:discussion}

\subsection{Case Study}
\label{app:case_study}

We illustrate the effect of bilateral conditioning through a representative example. Consider the problem: \textit{Find the smallest positive integer $n$ such that $n^2 + 2n + 2$ is divisible by $7$.} Without bilateral conditioning, the model generates diverse attempts independently. Correct solutions typically identify $n \equiv 2 \pmod{7}$ through systematic modular arithmetic, while incorrect attempts often test small values without systematic search, misapply the quadratic formula in modular context, or make arithmetic errors during substitution.

Under BICC, evaluating a correct solution involves observing incorrect attempts as context. This exposure to failure modes appears to strengthen commitment to systematic approaches—the model more consistently applies modular arithmetic rather than trial-and-error. Conversely, when evaluating an incorrect solution, observing correct attempts highlights where reasoning diverges, effectively increasing the gradient penalty for specific error patterns. The conditioning weight $w_i = \pi_\theta(o_i|q, \mathcal{O}^\mp)/\pi_\theta(o_i|q)$ captures this quantitatively: $w_i > 1$ for correct solutions indicates that observing failures increases confidence in correct approaches, while $w_i < 1$ for incorrect solutions indicates that observing successes decreases confidence in flawed approaches.

\subsection{Failure Modes}
\label{app:failure_modes}

BICC does not universally improve performance. When all sampled outputs are correct ($G^- = 0$) or all incorrect ($G^+ = 0$), bilateral conditioning cannot be applied, and the method falls back to standard GRPO. This occurs more frequently on very easy or very hard problems. Additionally, when incorrect solutions share the same fundamental error (e.g., all misread the problem), the opposite-partition context provides redundant signal; BICC benefits most when incorrect attempts exhibit diverse failure modes. For problems requiring very long solutions, the opposite-partition context may be heavily truncated due to length constraints, limiting the information available for conditioning.

\subsection{Why Weaker Models Benefit More}
\label{app:weak_model}

The experimental results consistently show larger BICC improvements on Phi-4-mini compared to Qwen3-4B. We hypothesize three contributing factors. First, weaker models exhibit greater variance in solution quality within each group, providing richer contrastive signal—when correct and incorrect solutions are more clearly separated, the bilateral context carries more information. Second, Figure~2 shows Phi-4-mini develops larger $\text{Cov}(R, \delta)$ during training (0.138 vs 0.066), indicating stronger reward-confidence correlation that makes RCC more impactful. Third, with lower baseline accuracy, there is simply more room for improvement; the relative improvement may be similar across models even when absolute gains differ.

\subsection{Connection to Related Frameworks}
\label{app:connections}

BICC shares motivation with self-consistency decoding \citep{wang2023selfconsistency}, which aggregates multiple samples at inference time. BICC instead leverages sample diversity during training, enabling the model to internalize contrastive patterns without inference-time overhead. The pairwise reformulation (Eq.~15) also reveals structural similarity to InfoNCE-style objectives; unlike standard contrastive learning where negatives are constructed or retrieved, BICC uses naturally occurring incorrect attempts from the same query, ensuring task-relevant contrast. The observation that harder problems benefit more suggests BICC implicitly implements a form of curriculum learning, where the model learns to distinguish subtle reasoning errors.

\subsection{Limitations and Future Directions}
\label{app:limitations}

Our theoretical analysis and experiments assume binary rewards. Extending to continuous rewards requires soft partition assignments, complicating the contrastive structure—a natural formulation would use reward magnitude for partition membership weights. BICC increases training cost due to longer input sequences; while the accuracy-efficiency trade-off is favorable in our experiments, this overhead may be prohibitive for very large models. We evaluate exclusively on mathematical reasoning with verifiable answers; generalization to open-ended tasks where correctness is not binary remains unexplored. Finally, BICC effectiveness depends on reward signal quality—noisy or misspecified rewards could lead to misleading contrastive signals.

Several directions merit further investigation: learning to selectively weight conditioning samples based on contrastive value, conditioning on solutions from related problems to enable cross-query transfer, and integrating BICC with process reward models for step-level contrastive learning.

\end{document}